\def\eqref#1{equation~\ref{#1}}
\def\1{\bm{1}}
\DeclareMathAlphabet{\mathsfit}{\encodingdefault}{\sfdefault}{m}{sl}
\SetMathAlphabet{\mathsfit}{bold}{\encodingdefault}{\sfdefault}{bx}{n}
\newcommand{\tens}[1]{\bm{\mathsfit{#1}}}
\def\tW{{\tens{W}}}
\newcommand{\RR}{\mathbb{R}}
\newcommand{\NN}{\mathbb{N}}
\newcommand{\mrm}{\mathrm}
\newcommand{\diag}{\mathrm{diag}}
\newcommand{\bone}{\mathbf{1}}
\newcommand{\mc}{\mathcal}
\newcommand{\cov}{\mathrm{cov}}
\newcommand{\norm}[1]{\left\lVert#1\right\rVert}
\newcommand{\ftarget}{f_{\mathrm{target}}}
\DeclarePairedDelimiter\abs{\lvert}{\rvert}
\let\oldabs\abs
\def\abs{\@ifstar{\oldabs}{\oldabs*}}
\newcommand{\dout}{d_{\mathrm{out}}}
\newcommand{\fdecode}{f_{\mathrm{decode}}}
\definecolor{darkblue}{rgb}{0.0,0.0,0.65}
\definecolor{darkred}{rgb}{0.68,0.05,0.0}
\definecolor{darkgreen}{rgb}{0.0,0.29,0.29}
\definecolor{darkpurple}{rgb}{0.47,0.09,0.29}
\newtheorem{proposition}{Proposition}
\newtheorem{definition}{Definition}
\newtheorem{lemma}{Lemma}
\newtheorem{theorem}{Theorem}
\newtheorem*{rep@theorem}{\rep@title}
\newcommand{\newreptheorem}[2]{%
\newenvironment{rep#1}[1]{%
 \def\rep@title{#2 \ref{##1}}%
 \begin{rep@theorem}}%
 {\end{rep@theorem}}}
\title{Expressive Sign Equivariant Networks \\ for Spectral Geometric Learning}
\author{%
  Derek Lim \\
  MIT CSAIL \\
  \texttt{dereklim@mit.edu} \\
   \And
   Joshua Robinson\thanks{Work completed whilst at MIT.} \\
   Stanford University \\
   \And
   Stefanie Jegelka \\
   TU Munich, MIT CSAIL \\
   \And
   Haggai Maron \\
   Technion, NVIDIA \\
}
\begin{document}

\maketitle
\begin{abstract}
Recent work has shown the utility of developing machine learning models that respect the structure and symmetries of eigenvectors. These works promote sign invariance, since for any eigenvector $v$ the negation $-v$ is also an eigenvector. However, we show that sign invariance is theoretically limited for tasks such as building orthogonally equivariant models and learning node positional encodings for link prediction in graphs. In this work, we demonstrate the benefits of sign \emph{equivariance} for these tasks. To obtain these benefits, we develop novel sign equivariant neural network architectures. Our models are based on a new analytic characterization of sign equivariant polynomials and thus inherit provable expressiveness properties. Controlled synthetic experiments show that our networks can achieve the theoretically predicted benefits of sign equivariant models.
Code is available at \url{https://github.com/cptq/Sign-Equivariant-Nets}.
\end{abstract}

\section{Introduction}

The need to process eigenvectors is ubiquitous in machine learning and the computational sciences. 
For instance, there is often a need to process eigenvectors of operators associated with manifolds or graphs~\citep{belkin2003laplacian, rustamov2007laplace}, principal components (PCA) of arbitrary datasets~\citep{pearson1901liii}, and eigenvectors arising from implicit or explicit matrix factorization methods~\citep{levy2014neural, qiu2018network}. However, eigenvectors are not merely unstructured data---they have rich structure in the form of symmetries~\citep{ovsjanikov2008global}. 

\begin{wrapfigure}{r}{.35\textwidth}
\vspace{-5pt}
\includegraphics[width=.30\textwidth]{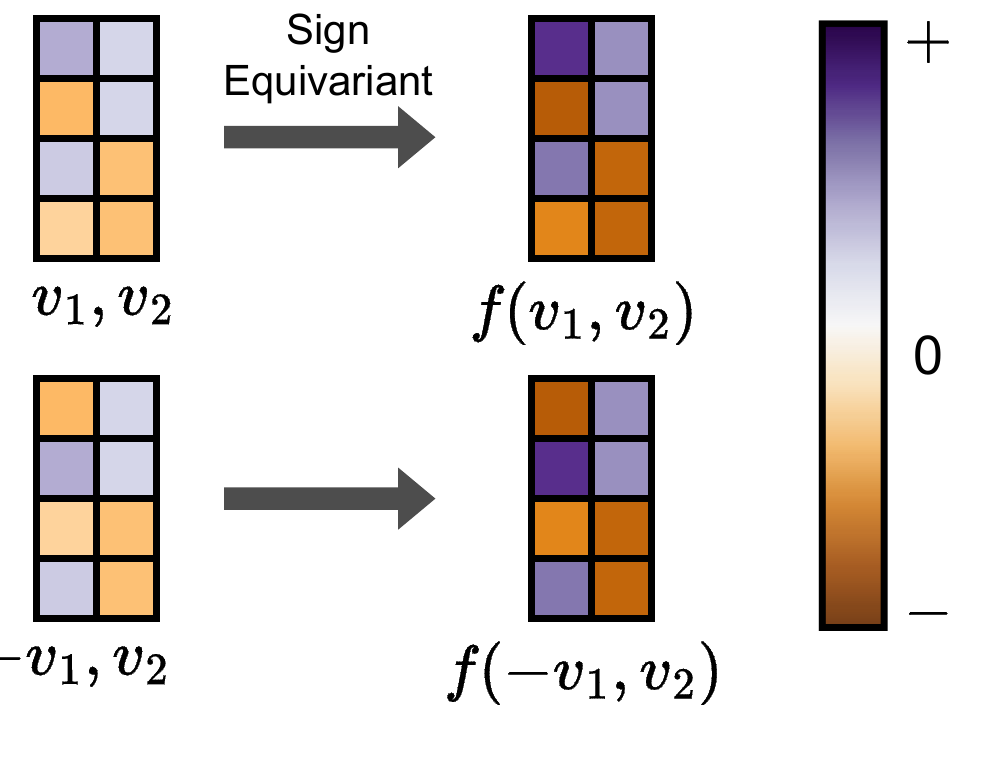}
\caption{Illustration of a sign equivariant function $f$. When column 1 of the input is negated, column 1 of the output is also negated.}
\label{fig:signeq_symmetries}
\vspace{-10pt}
\end{wrapfigure}

Specifically, eigenvectors have sign and basis symmetries. An eigenvector $v$ is sign symmetric in the sense that the sign-flipped vector $-v$ is also an eigenvector of the same eigenvalue. Basis symmetries occur when there is a repeated eigenvalue, as then there are infinitely many choices of eigenvector basis for the same eigenspace. Prior work has developed neural networks that are invariant to these symmetries, improving empirical performance in several settings~\citep{lim2023sign}.

The goal of this paper is to demonstrate why sign \textit{equivariance} can be useful and to characterize fundamental expressive sign equivariant architectures.
Our first contribution is to show that sign equivariant models 
are a natural choice for several applications, whereas sign \textit{invariant} architectures are provably insufficient for these applications. 
First, we show that sign and basis invariant networks are theoretically limited in expressive power for learning edge representations (and more generally multi-node representations) in graphs
because they learn structural node embeddings that are known to be limited for link prediction and multi-node tasks~\citep{srinivasan2019equivalence, zhang2021labeling}. In contrast, we show that sign equivariant models can bypass this limitation by maintaining positional information in node embeddings.
Furthermore, we show that sign equivariance combined with PCA can be used to parameterize expressive orthogonally equivariant point cloud models, thus giving an efficient alternative to {PCA-based} frame averaging~\citep{puny2021frame, atzmon2022frame}. In contrast, sign invariant models can only parameterize orthogonally \emph{invariant} models in this framework, which excludes many important application areas.  \ 

The second contribution of this work is to develop the first sign equivariant neural network architectures, with provable expressiveness guarantees.
We first present a difficulty in developing sign equivariant models: the ``Geometric Deep Learning Blueprint''~\citep{bronstein2021geometric} suggests developing an equivariant neural network by interleaving equivariant \emph{linear} maps and equivariant elementwise nonlinearities~\citep{cohen2016group, ravanbakhsh2017equivariance,  finzi2021practical}, but we show that our attempts to apply this approach are insufficient for expressive sign equivariant models. Namely, we show that sign equivariant linear maps between various input and output representations are very limited in their expressive power.

Hence, to develop our models, we derive a complete characterization of the sign equivariant polynomial functions. The form of these equivariant polynomials directly inspires our equivariant neural network architectures. Further, our architectures inherit the theoretical expressive power guarantees of the equivariant polynomials. Our characterization is also broadly useful for analysis and development of sign-symmetry-respecting architectures---for instance, we provide a new proof of the universality of SignNet~\citep{lim2023sign} by showing that it can approximate all sign invariant polynomials.

To validate our theoretical results, we conduct various numerical experiments on synthetic datasets.
Experiments in link prediction, n-body problems, and node clustering in graphs support our theory and demonstrate the utility of sign equivariant models.

\subsection{Background}\label{sec:background}

Let $f: \RR^{n \times k} \to \RR^{n \times k}$ be a function that takes eigenvectors $v_1, \ldots, v_k \in \RR^n$ of an underlying matrix as input, and outputs representations $f(v_1, \ldots, v_k)$.  
We often concatenate the eigenvectors into a matrix $V = \begin{bmatrix} v_1, \ldots, v_k \end{bmatrix} \in \RR^{n \times k}$, and write $f(V)$ as the application of $f$. For simplicity, in this work we assume the eigenvectors come from a symmetric matrix, so they are taken to be~orthonormal.

\textbf{Sign and basis symmetries.} Eigenvectors have symmetries, because there are many possible choices of eigenvectors of a matrix. For instance, if $v$ is a unit-norm eigenvector of a matrix, then so is the sign-flipped $-v$. If the eigenvalue of $v$ is simple, then $-v$ is the only other choice of unit-norm eigenvector of this eigenvalue. 

If $v_1, \ldots v_m$ are an orthonormal basis of eigenvectors for the same eigenspace (meaning they all have the same eigenvalue), then there are infinitely many other choices of orthonormal basis for this eigenspace; these other choices of basis can be written as $VQ$, where $V = [v_1 \ldots v_m] \in \RR^{n \times m}$ and $Q \in O(m)$ is an arbitrary orthogonal matrix. %

We refer to these symmetries collectively as sign and basis symmetries, or more simply as eigenvector symmetries. Note that sign symmetries are a special case of basis symmetries, as $-1$ and $1$ are the only orthogonal $1 \times 1$ matrices. Previous work has developed neural networks that are invariant to these symmetries---that is, networks that have the same output for any choice of sign or basis of the eigenvector inputs~\citep{lim2023sign}.

\textbf{Sign equivariance} means that if we flip the sign of an eigenvector, then the corresponding column of the output of the function $f$ has its sign flipped. In other words, for all choices of signs $s_1, \ldots, s_k \in \{-1, 1\}^k$,
\begin{equation}
    f(s_1 v_1, \ldots, s_k v_k)_{:, j} = s_j f(v_1, \ldots, v_k)_{:, j}, 
\end{equation}
where $A_{:, j}$ is the $j$-th column of an $n \times k$ matrix $A$. See Figure~\ref{fig:signeq_symmetries} for an illustration. In matrix form, letting $\diag(\{-1, 1\}^k)$ represent all $k \times k$ diagonal matrices with $-1$ or $1$ on the diagonal, $f$ is sign equivariant if
\begin{equation}
f(VS) = f(V)S  \qquad \text{for all } S \in \diag(\{-1, 1\}^k).
\end{equation}

As $O(1) = \{-1, 1\}$, we can write sign equivariance as equivariance with respect to a direct product of orthogonal groups $O(1) \times \ldots \times O(1)$. This is different from the equivariance to a single orthogonal group $O(d)$ considered in works on Euclidean-group equivariant networks~\citep{thomas2018tensor}.

\textbf{Permutation equivariance} is often also a desirable property of our functions $f$. We say that $f$ is \emph{permutation equivariant} if $f(PV) = Pf(V)$ for all $n \times n$ permutation matrices $P$. For instance, eigenvectors of matrices associated with simple graphs of size $n$ have such permutation symmetries, as the ordering of nodes is arbitrary.

\section{Applications of Sign Equivariance}\label{sec: applications}
In this section, we present several applications for which modeling networks with sign equivariant architectures is beneficial. We identify that sign invariant networks are \emph{provably} insufficient for these tasks, motivating the development of sign equivariant networks to address these limitations.

\subsection{Multi-Node Representations and Link Prediction}\label{sec:expressive_pos_embed}

\begin{figure}
    \centering
    \begin{subfigure}[b]{.25\columnwidth}
    \includegraphics[width=\columnwidth]{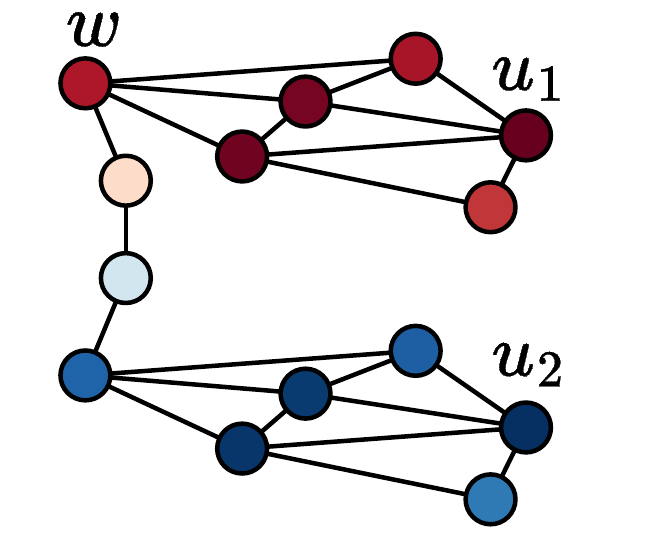}  
    \caption{Eigenvector 1}
    \end{subfigure}
    \begin{subfigure}[b]{.21\columnwidth}
    \includegraphics[width=\columnwidth]{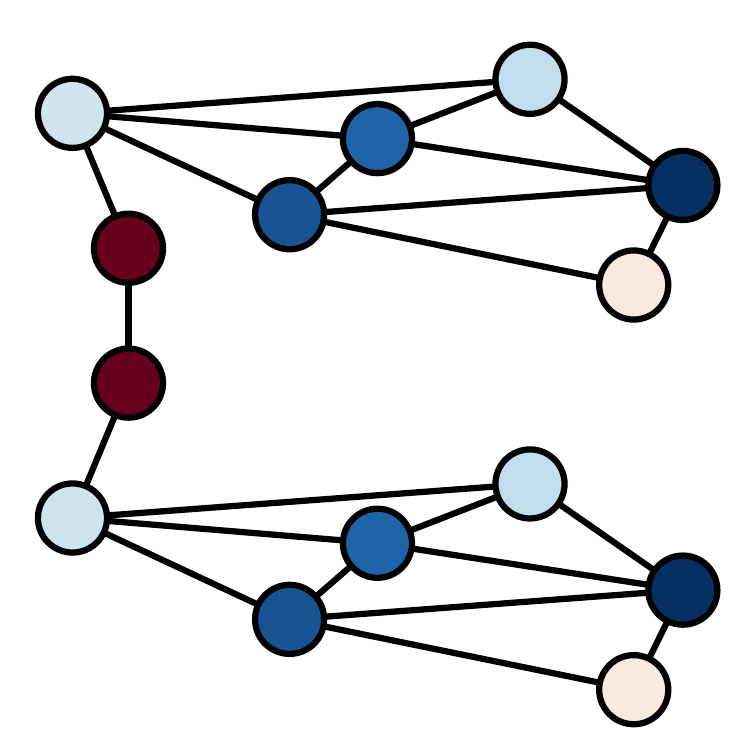}
    \caption{Sign Invariant}
    \end{subfigure}
    \begin{subfigure}[b]{.21\columnwidth}
    \includegraphics[width=\columnwidth]{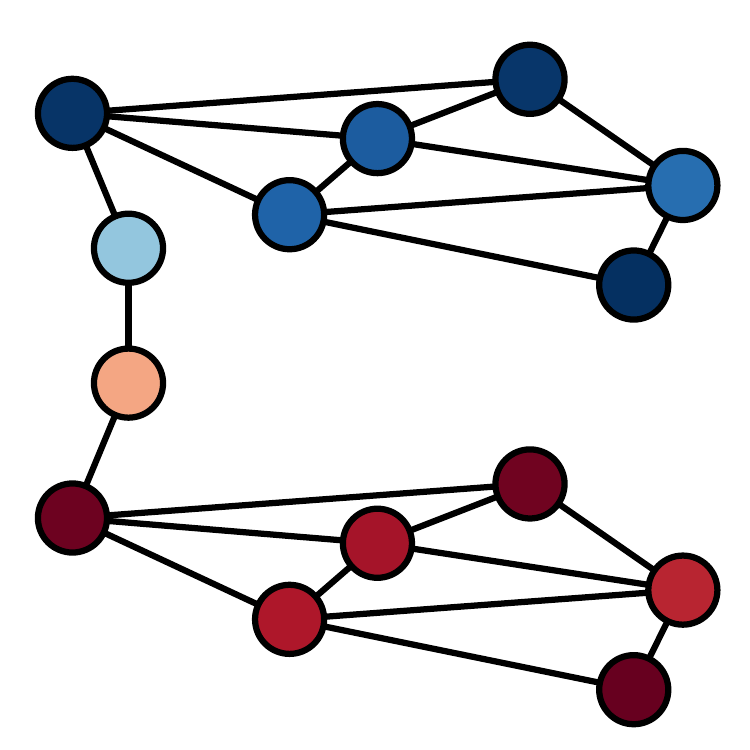}
    \caption{ Sign Equivariant}
    \end{subfigure}
    
    \caption{(a) First nontrivial normalized Laplacian eigenvector of a graph, which is positional. Nodes $u_1$ and $u_2$ are far apart in the graph, but automorphic. (b) Sign invariant node features, which are structural. Nodes $u_1$ and $u_2$ have the same feature. (c) Sign equivariant node features, which are positional. Nodes $u_1$ and $u_2$ have opposite signs.
    A link prediction model with sign invariant node features assigns $u_1$ and $u_2$ the same probability of connecting to $w$, while sign equivariant node features could give higher probability to $u_1$. 
    }
    \label{fig:aut_nodes}
    \vspace{-10pt}
\end{figure}

In several settings, we desire a machine learning model that computes representations for tuples of several nodes in a graph. For instance, link prediction tasks generate probabilities for pairs of nodes, and both hyperedge prediction and subgraph prediction tasks learn representations for collections of nodes in a graph~\citep{alsentzer2020subgraph, wang2023improving}. For ease of exposition, the rest of this section discusses link prediction, though the discussion applies to general multi-node tasks as well.

In link prediction, we typically want to learn \emph{structural node-pair representations}, meaning adjacency-permutation equivariant functions that give a representation for each pair of nodes; more precisely, a structural node-pair representation is a map $f: \RR^{n \times n} \to \RR^{n \times n}$ such that $f(PAP^\top) = Pf(A)P^\top$, where $f(A)_{i,j}$ is the representation of the pair of nodes $(i,j)$ in the graph with adjacency matrix $A$~\citep{srinivasan2019equivalence}
 (see Appendix~\ref{appendix:structural} for more discussion).
 One method to do this is to use a graph model such as a standard GNN to learn node representations $z_i$, and then obtain a node-pair representation for $(i,j)$ as some function $\fdecode(z_i, z_j)$ of $z_i$ and $z_j$. However, this approach is limited because standard GNNs learn \emph{structural node encodings}---that is, adjacency-permutation equivariant node features $f_{\mathrm{node}}: \RR^{n \times n} \to \RR^n$ such that $f_{\mathrm{node}}(PAP^\top) = Pf_{\mathrm{node}}(A)$~\citep{srinivasan2019equivalence, zhang2021labeling}. 
 \footnote{This adjacency permutation equivariance is different from permutation equivariance of general vectors as defined in Section~\ref{sec:background}.}
 Structural node encodings give automorphic nodes the same representation, which can be problematic since automorphic nodes can be far apart in the graph. For instance, in Figure~\ref{fig:aut_nodes}, $u_1$ and $u_2$ are automorphic, so a link predictor based on structural node encodings would give both $u_1$ and $u_2$ the same probability of connecting to $w$, but one would expect $u_1$ to have a higher probability of connecting to $w$. Most state-of-the-art link prediction methods on the \href{https://ogb.stanford.edu/docs/leader\_linkprop}{Open Graph Benchmark leaderboards}~\citep{hu2020open} were deliberately developed to avoid the issues of structural node encodings.

One way to surpass the limitations of structural node encodings is to use \emph{positional node embeddings}, which can assign different values to automorphic nodes. 
Intuitively, positional encodings capture information such as distances between nodes and global position of nodes in the graph (see~\citep{srinivasan2019equivalence} for a formal definition). 
Laplacian eigenvectors are an important example of node positional embeddings that capture much useful information of graphs~\citep{chung1997spectral, von2007tutorial}. In Figure~\ref{fig:aut_nodes} (a), the first nontrivial Laplacian eigenvector captures cluster structure in the graph, and as such assigns $u_1$ and $u_2$ very different values.

\paragraph{Pitfalls of sign and basis invariance.} When processing eigenvectors of matrices associated with graphs, invariance to the symmetries of the eigenvectors has been found useful~\citep{dwivedi2020benchmarking, lim2023sign}, especially for graph classification tasks. 
However, we show that exact invariance to these symmetries \emph{removes positional information},
and thus the outputs of sign invariant or basis invariant networks are in fact \textit{structural node encodings} (see Appendix~\ref{appendix:structural}).\footnote{When there are repeated eigenvalues, sign invariant embeddings maintain some positional information.}
Hence, eigenvector-symmetry-invariant networks cannot learn node representations that distinguish automorphic nodes, and thus face the aforementioned difficulties when used for link prediction or multi-node tasks:

\begin{proposition}\label{prop:automorphic_inv}
Let $f: \RR^{n \times k} \to \RR^{n \times \dout}$ be a permutation equivariant function, and let $V = [v_1, \ldots, v_k] \in \RR^{n \times k}$ be $k$ orthonormal eigenvectors of an adjacency matrix $A$. Let nodes $i$ and $j$ be automorphic, and let $z_i$ and $z_j \in \RR^{\dout}$  be their embeddings, i.e, the $i$th and $j$th row of $Z = f(V)$.
\begin{itemize}[leftmargin=11pt, topsep=2pt]
\setlength{\itemsep}{1pt}
    \item If $f$ is sign invariant and the eigenvalues associated with the $v_l$ are simple and distinct, then $z_i = z_j$.
    \item  If $f$ is basis invariant and $v_1, \ldots, v_k$ are a basis for some number of eigenspaces of $A$ then $z_i = z_j$. 
\end{itemize}
\end{proposition}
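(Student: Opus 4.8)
The plan is to exploit the interplay between the permutation symmetry of $A$ witnessed by automorphy and the sign/basis symmetry of the eigenvectors. First I would unpack the definition of automorphic nodes: there is a permutation matrix $P$ with $PAP^\top = A$ and $\sigma(i)=j$ for the associated permutation $\sigma$. Rewriting $PAP^\top = A$ as $PA = AP$ shows that $P$ commutes with $A$, hence $P$ maps each eigenspace of $A$ into itself; in particular $A(Pv_l) = \lambda_l(Pv_l)$ for every input eigenvector $v_l$.

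For the first bullet (sign invariance, simple distinct eigenvalues): since $\lambda_l$ is a simple eigenvalue, the unit-norm vector $Pv_l$ must equal $\pm v_l$, so write $Pv_l = s_l v_l$ with $s_l \in \{-1,1\}$. Stacking columns gives $PV = VS$ with $S = \diag(s_1,\dots,s_k) \in \diag(\{-1,1\}^k)$. Now I combine the two symmetry properties: permutation equivariance gives $f(PV) = Pf(V) = PZ$, while sign invariance gives $f(PV) = f(VS) = f(V) = Z$. Hence $PZ = Z$. Reading off the row indexed by $j = \sigma(i)$ on both sides (using that this row of $PZ$ is precisely $z_i$) yields $z_j = z_i$.

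For the second bullet (basis invariance, $V$ a basis for a union of eigenspaces): partition the columns of $V$ into blocks $V_{(1)},\dots,V_{(r)}$, one per eigenspace. Since $P$ preserves each eigenspace and $V_{(t)}$ is an orthonormal basis of it, we get $PV_{(t)} = V_{(t)}Q_t$ with $Q_t \in O(m_t)$ (orthogonality of $Q_t$ follows since $P$ and the columns involved are orthonormal). Thus $PV = VQ$ where $Q = \diag(Q_1,\dots,Q_r)$ is block-diagonal orthogonal, i.e.\ exactly a basis-symmetry transformation of $V$. The same two-line argument then applies: $PZ = f(PV) = f(VQ) = f(V) = Z$, so $z_i = z_j$.

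I do not anticipate a serious obstacle; the only points needing care are bookkeeping conventions (which permutation convention is used, rows versus columns of $Z$, and checking that the relevant row of $PZ$ picks out $z_i$), and the elementary linear-algebra facts that $PAP^\top = A \Leftrightarrow PA = AP$ and that a permutation commuting with $A$ preserves its eigenspaces. If one prefers not to invoke ``automorphic $\Rightarrow$ such a $P$ exists'' implicitly, that equivalence should be recorded (or attributed to the cited works) as the working definition of automorphic nodes.
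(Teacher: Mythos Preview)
Your proposal is correct and matches the paper's proof essentially line for line: both pick the automorphism $P$ with $PAP^\top=A$ sending $i$ to $j$, observe that $P$ preserves eigenspaces so that $PV=VS$ (simple case) or $PV=VQ$ (general case), and then combine permutation equivariance with sign/basis invariance to get $PZ=Z$ and hence $z_i=z_j$. The only cosmetic difference is that the paper proves the basis-invariant bullet directly and remarks that the sign-invariant bullet is a special case, whereas you spell out both.
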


\paragraph{A novel link prediction approach via sign equivariance.} 
The problem $z_i = z_j$ arises from the sign/basis invariances, which remove crucial positional information. We instead propose using sign \emph{equivariant} networks (as in Section~\ref{sec:equiv_poly_nets}) to learn node representations $z_i = f(V)_{i, :} \in \RR^k$.
These representations $z_i$ maintain positional information for each node thanks to preserving sign information (see Figure~\ref{fig:aut_nodes} (c)).
Then we use a sign invariant decoder $\fdecode(z_i, z_j) = \fdecode(Sz_i, Sz_j)$ for $S \in \diag(\{-1, 1\}^k)$ 
to obtain node-pair representations. For instance, the commonly used  $\fdecode = \mathrm{MLP}(z_i \odot z_j)$, where $\odot$ is the elementwise product, is sign invariant. 
When the eigenvalues are distinct,
this approach has the desired invariances (yielding structural node-pair representations) and also maintains positional information in the node embeddings; see Appendix~\ref{appendix:structural} for a proof of the invariances, and Appendix~\ref{appendix:signeq_link_expressive} for an example of where sign equivariant models can be used to compute strictly more expressive node-pair representations than sign invariant models.
More details and the proof of Proposition~\ref{prop:automorphic_inv} are in Appendix~\ref{appendix:link_pred}.

Our sign equivariance based approach differs substantially from existing methods for learning structural pair representations without being bottlenecked by structural node representations. Many of these methods are based on labeling tricks~\citep{zhang2021labeling, wang2023improving}, whereby the representation for a node-pair is obtained by labeling the two nodes in the pair and then processing an enclosing subgraph. Without special modifications~\citep{zhu2021neural, chamberlain2022graph}, this requires a separate expensive subgraph extraction and forward pass for each node-pair. In contrast, our method only requires one forward pass on the original graph to compute all positional node embeddings, after which pair representations can be obtained with a cheap, parallelizable~decoding.

\subsection{Orthogonal Equivariance}\label{sec:orthog_equiv}

\begin{figure}
    \centering
    \vspace{-10pt}
    \includegraphics[width=.8\columnwidth]{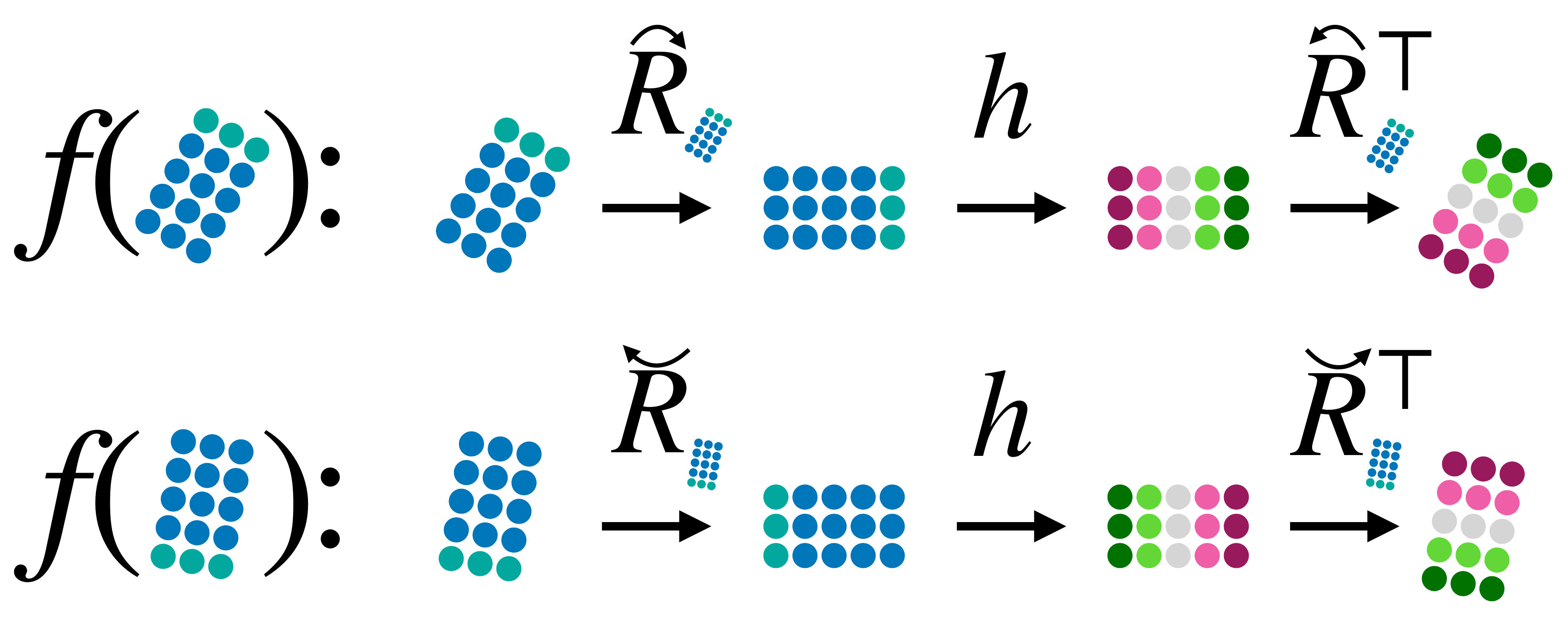}
    \caption{Using sign equivariant functions $h$ to parameterize orthogonally equivariant $f(X) = h(XR_X)R_X^\top$, where $R_X$ is a choice of principal components for the point cloud.
    We first transform $X$ via $R_X$ into an orientation that is unique up to sign flips, then process $X R_X$ using the sign equivariant model $h$, and finally reintegrate orientation information back into the output via $R_X^\top$.
    }
    \label{fig:rot_equiv}
    \vspace{-5pt}
\end{figure}

For various applications in modelling physical systems, we desire equivariance to rigid transformations;  thus, orthogonally equivariant models have been a fruitful research direction in recent years~\citep{thomas2018tensor, weiler20183d, anderson2019cormorant, deng2021vector}.
We say that a function $f: \RR^{n \times k} \to \RR^{n \times k}$ is orthogonally equivariant if $f(XQ) = f(X)Q$ for any $Q \in O(k)$, where $O(k)$ is the set of orthogonal matrices in $\RR^{k \times k}$. Orthogonal equivariance imposes infinitely many constraints on the function $f$. Several works have approached this problem by reducing to a finite set of constraints using so-called Principal Component Analysis (PCA) based frames~\citep{puny2021frame, atzmon2022frame, xiao2020endowing}.

PCA-frame methods take an input $X \in \RR^{n \times k}$, compute orthonormal eigenvectors $R_X \in O(k)$ of the covariance matrix $\cov(X) = (X - \frac{1}{n} \bone \bone^\top X)^\top (X - \frac{1}{n} \bone \bone^\top X)$ (assumed to have distinct eigenvalues), then average outputs of a base model $h$ for each of the $2^k$ sign-flipped inputs $X R_X S$, where $S \in \diag(\{-1, 1\}^k)$.
We instead suggest using a sign equivariant network to parameterize an efficient $O(k)$ equivariant model, which allows us to bypass the need to average the exponentially many sign-flipped inputs. For a sign equivariant network $h$, we define our model $f$ to be 
\begin{equation}
f(X) = h(X R_X) R_X^\top.
\end{equation}
See Figure~\ref{fig:rot_equiv} for an illustration.
Intuitively, this first transforms $X$ by  $R_X$ into a nearly canonical orientation that is unique up to sign flips; this can be seen as writing the points in the principal components basis, or aligning the principal components of $X$ with the coordinate axes. Then we process $X R_X$ using the model $h$ that respects the sign symmetries, and finally, we incorporate orientation information back into the output by post-multiplying by $R_X^\top$.  Our approach only requires one forward pass through $h$, whereas frame averaging requires $2^k$ forward passes through a base model.
The following proposition shows that $f$ is $O(k)$ equivariant, and inherits universality properties of $h$.\footnote{A class of model functions $\mc F_{\mrm{m}}$ from $\mc X \to \mc Y$ is universal with respect to a target class $\mc F_{\mrm{t}}$ if for all compact $\mc D \subseteq \mc X$, $f_{\mrm{t}} \in \mc F_{\mrm{t}}$, and $\epsilon > 0$, there is an $f_{\mrm{m}} \in \mc F_{\mrm{m}}$ such that $\norm{f_{\mrm{m}}(x) - f_{\mrm{t}}(x)} < \epsilon$ for all $x \in \mc D$.}

\begin{proposition}\label{prop:rotation_equiv}
     Consider a domain $\mathcal X \subseteq \RR^{n \times k}$ such that each $X \in \mathcal X$ has distinct covariance eigenvalues, and let $R_X$ be a choice of orthonormal eigenvectors of $\cov(X)$ for each $X \in \mathcal X$. If $h: \mathcal X \subseteq \RR^{n \times k} \to \RR^{n \times k}$ is sign equivariant, and if $f(X) = h(X R_X) R_X^\top$, then $f$ is well defined and orthogonally equivariant. 
     
     Moreover, if $h$ is from a universal class of sign equivariant functions, then the $f$ of the above form universally approximate $O(k)$ equivariant functions on $\mc X$.
\end{proposition}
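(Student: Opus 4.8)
The plan is to prove well-definedness, then $O(k)$-equivariance, then universality, and the only ingredient needed beyond bookkeeping is the covariance transformation law $\cov(XQ) = Q^\top \cov(X) Q$ for $Q \in O(k)$, which is immediate from $\cov(X) = (CX)^\top(CX) = X^\top C X$ with $C = I - \tfrac1n \bone\bone^\top$ symmetric and idempotent. Throughout I will use two facts implicitly required for the statement to be meaningful: that $\mathcal X$ is closed under right multiplication by $O(k)$ (so $XR_X \in \mathcal X$ and ``$O(k)$-equivariant on $\mathcal X$'' is well posed), and that eigenvalues of $\cov(X)$ are ordered by a fixed convention (say decreasing); since they are distinct, a matrix of orthonormal eigenvectors of $\cov(X)$ is then unique up to right multiplication by some $S \in \diag(\{-1,1\}^k)$.

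\emph{Well-definedness.} If $R_X$ is replaced by another valid choice $R_X S$ with $S \in \diag(\{-1,1\}^k)$, then sign equivariance of $h$ and $SS^\top = I$ give $h(XR_XS)(R_XS)^\top = h(XR_X)\,S\,S^\top R_X^\top = h(XR_X)R_X^\top$, so $f$ does not depend on the choice. \emph{Equivariance.} Fix $Q \in O(k)$. By the transformation law, if $R_X$ diagonalizes $\cov(X)$ then $Q^\top R_X$ diagonalizes $\cov(XQ)$ with the same (distinct, identically ordered) spectrum, hence $R_{XQ} = Q^\top R_X S$ for some $S \in \diag(\{-1,1\}^k)$. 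Substituting and again using sign equivariance of $h$ and $SS^\top = I$,
\[
f(XQ) = h(XQ\,R_{XQ})R_{XQ}^\top = h(XR_XS)\,S^\top R_X^\top Q = h(XR_X)R_X^\top Q = f(X)Q .
\]

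\emph{Universality.} The key observation is that a continuous $O(k)$-equivariant target $g$ is in particular sign equivariant (since $\diag(\{-1,1\}^k) \subset O(k)$), and plugging $g$ itself into the construction returns it exactly: $g(XR_X)R_X^\top = g(X)R_XR_X^\top = g(X)$ by $O(k)$-equivariance of $g$. Hence it suffices to approximate $g$ by the universal sign equivariant class on a suitable compact set. Given compact $\mathcal D \subseteq \mathcal X$ and $\eps>0$, set $K = \{XQ : X \in \mathcal D,\ Q \in O(k)\}$, the continuous image of the compact set $\mathcal D \times O(k)$ under $(X,Q)\mapsto XQ$, hence compact, with $K \subseteq \mathcal X$ and $XR_X \in K$ for every $X \in \mathcal D$. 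Choose $h$ in the class with $\norm{h(Y)-g(Y)} < \eps$ for all $Y \in K$. Since right multiplication by the orthogonal matrix $R_X^\top$ preserves the (Frobenius or operator) norm,
\[
\norm{f(X) - g(X)} = \norm{\big(h(XR_X) - g(XR_X)\big)R_X^\top} = \norm{h(XR_X) - g(XR_X)} < \eps
\]
for every $X \in \mathcal D$, which is the claimed approximation.

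I expect no genuine obstacle here; the only delicate points are bookkeeping. First, one must check that $R_{XQ}$ and $Q^\top R_X$ differ by a pure sign matrix and not a signed permutation, which is exactly why distinct eigenvalues plus the fixed ordering convention are needed (otherwise $h$ would additionally have to be equivariant to column permutations). Second, in the universality step one must ensure the set $\{XR_X : X \in \mathcal D\}$ on which $h$ is required to approximate $g$ lies in a compact subset of $\mathcal X$, which is handled by enclosing it in the $O(k)$-orbit $K$ of $\mathcal D$. Everything else reduces to the one-line identities above.
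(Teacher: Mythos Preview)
Your proof is correct and follows essentially the same approach as the paper: well-definedness and equivariance via the identity $R_{XQ}=Q^\top R_X S$ and sign equivariance of $h$, and universality via the observation that an $O(k)$-equivariant target $g$ satisfies $g(X)=g(XR_X)R_X^\top$, so the approximation error reduces to $\norm{h(XR_X)-g(XR_X)}$ under the orthogonally invariant norm. You are in fact slightly more careful than the paper in the universality step: the paper simply approximates on all of $\mathcal X$, whereas you explicitly construct the compact $O(k)$-orbit $K$ of $\mathcal D$ to ensure the sign-equivariant universal approximation hypothesis applies; you also make explicit the implicit assumptions ($O(k)$-closure of $\mathcal X$, fixed eigenvalue ordering) that the paper leaves tacit.
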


We include a proof of this result in Appendix~\ref{appendix:orth_equiv}. This result also follows from Theorems 3.1 and 3.3 of \citet{kaba2023equivariance}, who show that generally we can canonicalize up to a subgroup $K$ of a group $G$, and achieve $G$-equivariance via a $K$-equivariant base predictor. In our case, $G = O(k)$ and $K = \{-1, 1\}^k$

\paragraph{Sign invariance only gives orthogonal invariance.} In a similar way, a sign \emph{invariant} model can be used to obtain an orthogonally \emph{invariant} model with PCA frames, but it cannot be used for orthogonal equivariance; instead, sign equivariance is needed. 

\section{Sign Equivariant Polynomials and Networks}\label{sec:equiv_poly_nets}

\begin{table}[t]
    \centering
    \caption{Sign invariant or equivariant polynomials and corresponding neural network architectures for different input and output spaces. $v \in \RR^k$ or $V \in \RR^{n \times k}$ are inputs to the polynomials or networks. Appendix~\ref{appendix:polys} contains more details on the polynomials.}
    {\scriptsize
    \begin{tabular}{lcc}
    \toprule
         Constraints & Polynomials & Neural Networks   \\
         \midrule
        $\RR^k \to \RR$ inv. & $\sum_{d_1, \ldots, d_k=0}^D \tW_{d_1, \ldots, d_k} v_1^{2d_1} \cdots v_k^{2d_k}$ &  $\mathrm{MLP}(|v|)$  \\
        $\RR^{n \times k} \to \RR$ inv. & $q([V_{i_1, j} \cdot V_{i_2, j}]_{i_1 \in [n], i_2 \in [n], j \in [k]})$ & $\mathrm{SignNet}(V) = \rho([\phi(v_i) + \phi(-v_i)]_{i=1, \ldots, k})$   \\
        \midrule
        $\RR^k \to \RR^k$ equiv. & $v \odot p_{\mathrm{inv}}(v)$ &  $v \odot \mathrm{MLP}(|v|)$ \\
        $\RR^{n \times k} \to \RR^{n' \times k}$ equiv. & $W^{(2)}\left((W^{(1)}V) \odot p_{\mathrm{inv}}(V) \right)$ &  $[W_1^{(l)}v_1, \ldots, W^{(l)}_k v_k] \odot \mathrm{SignNet}_l(V)$ \\
         \bottomrule
    \end{tabular}
    }
    \vspace{-10pt}
    \label{tab:equiv_poly_networks}
\end{table}

In this section, we analytically characterize the sign equivariant polynomials, and use this characterization to develop sign equivariant architectures. As equivariant polynomials universally approximate continuous equivariant functions~\citep{yarotsky2022universal}, our  architectures inherit universality guarantees. 
We summarize our results on polynomials and neural network architectures in Table~\ref{tab:equiv_poly_networks}.
Our characterization of the invariant polynomials also allows us to give an alternative proof of the universality of the sign invariant neural network SignNet~\citep{lim2023sign} (see Appendix~\ref{appendix:poly_signnet}).

\subsection{Sign Equivariant Linear Maps}

First, we consider the important case of degree one polynomials, i.e. sign equivariant linear maps from $\RR^{n \times k} \to \RR^{n' \times k}$. These maps are very limited in expressive power, as they act independently on each~eigenvector. 
\begin{lemma}\label{lem:signeq_linear}
    A linear map $W: \RR^{n \times k} \to \RR^{n' \times k}$ is sign equivariant if and only if it can be written as
    \begin{equation}\label{eq:signeq_linear}
        W(X) = \begin{bmatrix}W_1 X_1 \  \ldots \  W_k X_k \end{bmatrix}
    \end{equation}
    for some linear maps $W_1, \ldots, W_k : \RR^{n} \to \RR^{n'}$, where $X_i \in \RR^n$ is the $i$th column of $X \in \RR^{n \times k}$.
\end{lemma}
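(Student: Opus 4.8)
The plan is to characterize all linear maps $W: \RR^{n \times k} \to \RR^{n' \times k}$ satisfying $W(XS) = W(X)S$ for every $S \in \diag(\{-1,1\}^k)$. The ``if'' direction is a one-line check: if $W(X) = [W_1 X_1 \ \ldots \ W_k X_k]$, then flipping the sign of column $i$ of $X$ (i.e. replacing $X_i$ by $-X_i$) sends the $i$th output column $W_i X_i$ to $-W_i X_i$ and leaves the others unchanged, which is exactly the action of right-multiplication by $S$. So the real content is the ``only if'' direction.

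First I would use linearity to write $W$ in coordinates: since $\RR^{n\times k}$ has basis $\{E_{ab}\}$ (the matrix with a $1$ in entry $(a,b)$), $W$ is determined by the images $W(E_{ab}) \in \RR^{n'\times k}$, and the $j$th output column of $W(X)$ is a linear function of $X$, say $W(X)_{:,j} = \sum_{i=1}^k W_{ji} X_i$ for some matrices $W_{ji} \in \RR^{n'\times n}$ (each output column depends linearly on each input column through some block). The sign equivariance condition, applied to the diagonal matrix $S = S^{(m)}$ that flips only the $m$th sign, says that the $j$th output column of $W(XS^{(m)})$ equals $s_j^{(m)}$ times the $j$th output column of $W(X)$. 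Expanding the left side, $W(XS^{(m)})_{:,j} = \sum_i W_{ji}(XS^{(m)})_i = \sum_i W_{ji}(s_i^{(m)} X_i)$. Comparing with $s_j^{(m)} \sum_i W_{ji} X_i$ for all $X$ forces $s_i^{(m)} W_{ji} = s_j^{(m)} W_{ji}$ for every $i,j$. When $i = m$ and $j \neq m$ we get $-W_{ji} = W_{ji}$, hence $W_{jm} = 0$; ranging $m$ over all indices $\neq j$ kills every off-diagonal block, leaving only the diagonal blocks $W_{jj} =: W_j$. Then $W(X)_{:,j} = W_j X_j$, which is precisely \eqref{eq:signeq_linear}.

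There is essentially no hard part here — the argument is a direct coordinate computation, and the only thing to be a little careful about is the bookkeeping of which diagonal sign matrix to test against (one needs the single-flip matrices $S^{(m)}$, which generate $\diag(\{-1,1\}^k)$, so it suffices to impose equivariance for those). I would present the block decomposition $W(X)_{:,j} = \sum_i W_{ji} X_i$ explicitly, note that it is just the general form of a linear map written out columnwise, then run the single-flip test to conclude $W_{ji} = 0$ for $i \neq j$. If desired, one can also phrase this representation-theoretically: $\RR^{n\times k}$ decomposes as a direct sum of $k$ sign-representations of $O(1)^k$ (one per column, each carrying the sign of that column), and a map between such representations must be block-diagonal by Schur's lemma over $\RR$ since distinct columns carry inequivalent characters — but the elementary computation is shorter and self-contained.
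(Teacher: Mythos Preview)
Your proposal is correct and essentially identical to the paper's proof: both write $W$ in coordinates, test equivariance against the single-flip sign matrices, and conclude that each cross-column coefficient must equal its own negative and hence vanish. The only cosmetic difference is that you package the coordinates into column-blocks $W_{ji}$ while the paper works entrywise with standard basis matrices $\tilde X = E_{lc}$; the underlying argument is the same (and your optional Schur's-lemma remark is a nice addition the paper does not make).
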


See Appendix~\ref{appendix:signeq_linear} for the proof. Notably, when $n=n'=1$, the linear maps are diagonal matrices.

This means a model with elementwise nonlinearities and sign equivariant linear maps will not capture any \emph{interactions} between eigenvectors.
For instance, when used for parameterizing orthogonally equivariant models as in Section~\ref{sec:orthog_equiv}, such a model would process each principal component direction of the point cloud independently.
Hence, the popular approach ---outlined in the ``Geometric Deep Learning Blueprint''~\citep{bronstein2021geometric}---of interleaving equivariant linear maps and equivariant nonlinearities~\citep{cohen2016group, zaheer2017deep, kondor2018generalization, maron2018invariant, maron2019universality, finzi2021practical} is not as fruitful here.

However, one may choose instead different group representations for the input and output space, but our attempts to do this do not lead to efficient models. For instance, a common method to improve expressive power of models that use equivariant linear maps is to use tensor representations~\citep{maron2018invariant, maron2019universality, finzi2021practical}; in our case, this would correspond having equivariant hidden representations in $\RR^{n \times k^m}$ for some tensor order $m$. This is also inefficient, as we explain in Appendix~\ref{appendix:signeq_linear_tensor}; we show that such an approach would have to lift to tensors of at least order 3, and that there are many sign equivariant linear maps between tensors of order 3. There is a possibility that some other group representations may allow the Geometric Deep Learning Blueprint to work better for sign equivariant networks, but we could not find any such representations.

For these reasons, we will now analyze the entire space of sign equivariant polynomials.

\subsection{Sign Equivariant Polynomials}

Consider polynomials $p: \RR^{n \times k} \to \RR^{n' \times k}$ that are sign equivariant, meaning $p(VS) = p(V)S$ for $S \in \diag(\{-1, 1\}^k)$.  We can show that a polynomial $p$ is sign equivariant if and only if it can be written as the elementwise product of a simple (linear) sign equivariant polynomial and a general sign invariant polynomial, followed by another linear sign equivariant map.
\begin{theorem}\label{thm:no_perm_poly}
    A polynomial $p: \RR^{n \times k} \to \RR^{n' \times k}$ is sign equivariant if and only if it can be written
    \begin{equation}
        p(V) = W^{(2)}\left((W^{(1)} V) \odot p_{\mathrm{inv}}(V) \right)
    \end{equation}
    for sign equivariant linear $W^{(2)}$ and $W^{(1)}$, and a sign invariant polynomial $p_{\mathrm{inv}}: \RR^{n \times k} \to \RR^{n' \times k}$. 
\end{theorem}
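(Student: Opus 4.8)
The plan is to prove the two directions separately, with the ``if'' direction being a routine verification and the ``only if'' direction containing the real content. For the easy direction, suppose $p(V) = W^{(2)}\bigl((W^{(1)} V) \odot p_{\mathrm{inv}}(V)\bigr)$. Given $S \in \diag(\{-1,1\}^k)$, I would use Lemma~\ref{lem:signeq_linear} to write $W^{(1)}(VS) = (W^{(1)} V)S$ and similarly for $W^{(2)}$, together with sign invariance $p_{\mathrm{inv}}(VS) = p_{\mathrm{inv}}(V)$. Since right-multiplication by the diagonal matrix $S$ rescales column $j$ by $s_j$, it commutes with the elementwise (Hadamard) product: $\bigl((W^{(1)}V)S\bigr) \odot p_{\mathrm{inv}}(V) = \bigl((W^{(1)}V) \odot p_{\mathrm{inv}}(V)\bigr)S$. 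Chaining these gives $p(VS) = p(V)S$, so $p$ is sign equivariant.

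For the ``only if'' direction, the key observation is a monomial-by-monomial analysis of the sign action. Write $p$ in components: $p(V)_{i,j}$ is a polynomial in the entries $V_{a,b}$. Under $V \mapsto VS$, the entry $V_{a,b}$ becomes $s_b V_{a,b}$, so a monomial $\prod_{a,b} V_{a,b}^{e_{a,b}}$ picks up the sign $\prod_b s_b^{\,d_b}$ where $d_b = \sum_a e_{a,b}$ is the total degree of the monomial in the $b$-th column of $V$. The equivariance condition $p(VS)_{i,j} = s_j\, p(V)_{i,j}$ forces every monomial appearing in $p(V)_{i,j}$ to transform with the sign $s_j$, i.e.\ to satisfy $\prod_b s_b^{d_b} = s_j$ for all choices of signs; equivalently, the column-degree $d_j$ is odd and every other column-degree $d_b$ ($b \neq j$) is even. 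I would then factor each such monomial as $V_{a,j} \cdot (\text{even monomial})$ by peeling off one factor of $V_{a,j}$ from the $j$-th column (there is at least one since $d_j$ is odd), leaving a monomial all of whose column-degrees are even. Collecting terms, $p(V)_{i,j} = \sum_{a} V_{a,j}\, q_{i,j,a}(V)$ where each $q_{i,j,a}$ is a polynomial in which every column-degree is even; a monomial with all column-degrees even is exactly a sign-invariant monomial (the sign it picks up is $\prod_b s_b^{\text{even}} = 1$), so each $q_{i,j,a}$ is a sign-invariant polynomial. Writing this in matrix/Hadamard notation gives precisely the claimed form: the inner $W^{(1)}V$ (with each $W^{(1)}_j$ ranging over coordinate-projection-like maps that select $V_{a,j}$), Hadamard-multiplied by a sign-invariant matrix polynomial $p_{\mathrm{inv}}(V)$, followed by a sign-equivariant linear combination $W^{(2)}$ that sums over the index $a$ and reindexes to the $n' $ output rows.

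The main obstacle — and the step to present carefully — is getting the bookkeeping of indices right so that the ``peeled off'' linear factor can be packaged as a single sign-equivariant linear map $W^{(1)}: \RR^{n\times k} \to \RR^{N\times k}$ (for a suitably enlarged intermediate dimension $N$) acting columnwise, as dictated by Lemma~\ref{lem:signeq_linear}, and so that the residual sign-invariant parts assemble into one matrix-valued sign-invariant polynomial $p_{\mathrm{inv}}: \RR^{n\times k}\to\RR^{N\times k}$ whose $(\cdot,j)$ column only involves the invariants needed for output column $j$. Concretely, one should index the intermediate space by pairs $(a, \text{term})$, let $W^{(1)}$ send $V$ to the matrix whose $\bigl((a,\text{term}),j\bigr)$ entry is $V_{a,j}$, let $p_{\mathrm{inv}}(V)$ carry the corresponding invariant factor in the matching slot, and let $W^{(2)}$ contract over $(a,\text{term})$ to produce $p(V)_{i,j}$; checking that $W^{(1)}, W^{(2)}$ are genuinely sign equivariant is immediate from their columnwise block-diagonal form via Lemma~\ref{lem:signeq_linear}. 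A subtlety worth noting is that the ``every column-degree even'' characterization of sign-invariant monomials, and the fact that a polynomial is sign invariant iff all its monomials are, uses that the group $\{-1,1\}^k$ separates monomials by their parity vector $(d_1 \bmod 2, \dots, d_k \bmod 2)$ — this is where averaging over the group (or just evaluating at the $2^k$ sign patterns) makes the argument rigorous; I would state this as the core combinatorial lemma and keep the rest as assembly.
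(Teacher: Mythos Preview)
Your proposal is correct and follows essentially the same approach as the paper: both perform a monomial-by-monomial analysis, deduce from equivariance that each monomial in $p(V)_{\cdot,j}$ must have odd total degree in column $j$ and even total degree in every other column, peel off one factor $V_{a,j}$ to leave a sign-invariant remainder, and then assemble into the stated form by indexing an enlarged intermediate space by (row index, term) and letting $W^{(1)}$ select the linear factor, $p_{\mathrm{inv}}$ carry the invariant cofactor, and $W^{(2)}$ sum over the auxiliary index. Your treatment is slightly more explicit than the paper's on the ``if'' direction and on the need for the intermediate dimension $N$ to exceed $n'$ (which the theorem statement itself leaves implicit), but the argument is the same.
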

This reduction of sign equivariant polynomials to sign invariant polynomials combined with simple operations is convenient, as it enables us to leverage recent universal models for sign invariant functions~\citep{lim2023sign}.
The proof of this statement is in Appendix~\ref{appendix:polys}, which proceeds by showing that sign equivariance leads to linear constraints on the coefficients of a polynomial, which requires the polynomial to take the form stated in the Theorem.

\subsection{Sign Equivariance without Permutation Symmetries}\label{sec:no_perm_sign_eq}

Using Theorem~\ref{thm:no_perm_poly}, we can now develop sign equivariant architectures.  
We parameterize sign equivariant functions $f: \RR^{n \times k} \to \RR^{n' \times k}$ as a composition of layers $f_l$, each of the form
\begin{equation}\label{eq:no_perm_layer}
    f_l(V) = [W_1^{(l)}v_1, \ldots, W^{(l)}_k v_k] \odot \mathrm{SignNet}_l(V),
\end{equation}
in which the $W^{(l)}_i: \RR^n \to \RR^{n'}$ are arbitrary linear maps, and $\mathrm{SignNet}_l: \RR^{n \times k} \to \RR^{n' \times k}$ is sign invariant~\citep{lim2023sign}. In the case of $n = n' = 1$, there is a simple universal form: we can write a sign equivariant function $f: \RR^k \to \RR^k$ as $f(v) = v \odot \mathrm{MLP}(|v|)$, where $|v|$ is the elementwise absolute value. These two architectures are universal because they can approximate sign equivariant polynomials. Here, the sign invariant part captures interactions between eigenvectors that the equivariant linear maps cannot.

\begin{proposition}\label{prop:signeq_universal}
Functions of the form $v \mapsto v \odot \mathrm{MLP}(|v|)$ universally approximate continuous sign equivariant functions $f: \RR^k \to \RR^k$.

Compositions $f_2 \circ f_1$ of functions $f_l$ as in \eqref{eq:no_perm_layer} universally approximate continuous sign equivariant functions $f: \RR^{n \times k} \to \RR^{n' \times k}$.
\end{proposition}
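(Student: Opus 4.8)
The plan is to reduce both claims to the polynomial characterization in Theorem~\ref{thm:no_perm_poly}. Since sign flips act linearly through the finite (hence compact) group $\{-1,1\}^k$, continuous sign equivariant functions on a compact domain are uniformly approximable by sign equivariant polynomials~\citep{yarotsky2022universal}. So I would fix a compact $\mc D$ and a continuous sign equivariant target $f$, pick a sign equivariant polynomial $p$ with $\sup_{\mc D}\norm{p-f}<\epsilon/2$, and then show a network of the stated form uniformly $(\epsilon/2)$-approximates $p$ on $\mc D$; the triangle inequality finishes it. The only real content is matching each factor produced by Theorem~\ref{thm:no_perm_poly} to a piece of the architecture.

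For the first claim ($n=n'=1$), Theorem~\ref{thm:no_perm_poly} writes $p(v)=W^{(2)}\big((W^{(1)}v)\odot p_{\mathrm{inv}}(v)\big)$, and by Lemma~\ref{lem:signeq_linear} the maps $W^{(1)},W^{(2)}$ are diagonal. Hence $p(v)=v\odot q(v)$ where $q$ is, up to the fixed diagonal rescaling, the sign invariant polynomial $p_{\mathrm{inv}}$; in particular each $q_i$ contains only even powers of each coordinate, so $q$ factors through $v\mapsto|v|$ as a continuous function on the compact set $\{|v|:v\in\mc D\}$. I would then invoke the classical universal approximation theorem to get an $\mathrm{MLP}$ with $\sup_{\mc D}\norm{\mathrm{MLP}(|v|)-q(v)}$ arbitrarily small, and finish by noting $\norm{v}_\infty$ is bounded on $\mc D$, so $\norm{v\odot\mathrm{MLP}(|v|)-v\odot q(v)}$ is correspondingly small. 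The detour through polynomials is essential here: a generic odd continuous function is \emph{not} $v$ times a continuous function of $|v|$, whereas every odd polynomial is.

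For the second claim, I would again apply Theorem~\ref{thm:no_perm_poly} to get $p(V)=W^{(2)}\big((W^{(1)}V)\odot p_{\mathrm{inv}}(V)\big)$ and use Lemma~\ref{lem:signeq_linear} to write each of $W^{(1)},W^{(2)}$ in per-column block form $[W_1\,\cdot\,,\dots,W_k\,\cdot\,]$. Then I assign the two layers as follows: the first layer $f_1$ uses the blocks of $W^{(1)}$ for its linear part and a $\mathrm{SignNet}_1$ chosen to uniformly approximate $p_{\mathrm{inv}}$ on $\mc D$ (possible by universality of SignNet for continuous sign invariant functions, Appendix~\ref{appendix:poly_signnet}), so $f_1(V)\approx (W^{(1)}V)\odot p_{\mathrm{inv}}(V)$; the second layer $f_2$ uses the blocks of $W^{(2)}$ for its linear part and the constant network $\mathrm{SignNet}_2\equiv\bone$ (which lies in the SignNet family, e.g. with $\rho$ constant), so that $f_2$ is exactly the linear map $W^{(2)}$. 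Composing gives $f_2(f_1(V))\approx W^{(2)}\big((W^{(1)}V)\odot p_{\mathrm{inv}}(V)\big)=p(V)$. For the error I would use that $f_2$ is linear (hence Lipschitz) and that $W^{(1)}V$ is bounded on the compact $\mc D$, so the composed error is at most $\norm{W^{(2)}}_{\mathrm{op}}$ times a constant times $\sup_{\mc D}\norm{\mathrm{SignNet}_1(V)-p_{\mathrm{inv}}(V)}$, which I can drive below $\epsilon/2$.

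The main obstacle will be the bookkeeping in the second claim: one has to see that exactly two layers of the form \eqref{eq:no_perm_layer} are needed and sufficient — the first to realize the product $(W^{(1)}V)\odot p_{\mathrm{inv}}(V)$, the second to realize the \emph{outer} linear map $W^{(2)}$, which mixes coordinates \emph{after} the elementwise product and so cannot be folded into a single layer — and then to check that the uniform approximation errors do not blow up when passed through the elementwise products and the linear maps on the compact domain. The remaining ingredients (Lemma~\ref{lem:signeq_linear}, Theorem~\ref{thm:no_perm_poly}, SignNet universality, standard MLP universality, and polynomial density of equivariant functions) are all available off the shelf.
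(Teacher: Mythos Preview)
Your proposal is correct and essentially matches the paper's proof: approximate by a sign equivariant polynomial, invoke the polynomial characterization, realize the factorization with two layers (the first carrying $(W^{(1)}V)\odot p_{\mathrm{inv}}(V)$ via a SignNet approximating $p_{\mathrm{inv}}$, the second carrying $W^{(2)}$ with $\mathrm{SignNet}_2\equiv\bone$), and control the error via boundedness on the compact domain. The only cosmetic difference is that for the second claim the paper works directly with the explicit monomial form of Proposition~\ref{prop:feat_sign_equiv_poly} rather than citing Theorem~\ref{thm:no_perm_poly}, so the intermediate hidden dimension and the roles of $W^{(1)},W^{(2)}$ are spelled out index by index; your higher-level use of Theorem~\ref{thm:no_perm_poly} arrives at the same construction.
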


\subsection{Sign Equivariance and Permutation Equivariance}\label{sec:sign_equiv_dss}

For models on eigenvectors that stem from graphs or point clouds, in addition to sign equivariance, we may demand permutation equivariance, i.e., $f(PV) = Pf(V)$ for all permutation matrices $P \in \mathbb{R}^{n \times n}$. To add permutation equivariance to our neural network architecture from Section~\ref{sec:no_perm_sign_eq}, we use it within the framework of  DeepSets for Symmetric Elements (DSS) \citep{maron2020learning}. 
For a hidden dimension size of $d_f$, 
each layer $f_l: \RR^{n \times k \times d_f} \to \RR^{n \times k \times d_f}$ of our DSS-based sign equivariant network takes the following form on row $i$:
\begin{equation}
    f_l(V)_{i, :} = f^{(1)}_l\left(V_{i,:}\right)  + f^{(2)}_l\Big(\sum\nolimits_{j \neq i} V_{j, :}\Big),
\end{equation}
where $f^{(1)}_l$ and $f^{(2)}_l$ are sign equivariant functions as in Section~\ref{sec:no_perm_sign_eq}. Sometimes we take $d_f = 1$, in which case we can use the simpler $\RR^k \to \RR^k$ sign equivariant networks ($v \odot \mathrm{MLP}(|v|)$) as $f^{(1)}_l$ and $f^{(2)}_l$.
If we have graph information, then we can do message-passing by changing the sum over $j \neq i$ to a sum over a neighborhood of node $i$.
DSS has universal approximation guarantees~\citep{maron2020learning}, but they only apply for groups %
that act as permutation matrices, whereas the sign group $\{-1, 1\}^k$ does not. Hence, the universal approximation properties of our proposed DSS-based architecture are still an open question.

\section{Experiments}

\newcommand{\std}[1]{\pm #1}

\begin{table}[t]
    \vspace{-10pt}
    \centering
    \caption{Link prediction AUC and runtime per epoch  for structural edge models.}
    {\small
    \begin{tabular}{lccccc}
        \toprule
        & \multicolumn{2}{c}{Erd\H os-R\'enyi} & & \multicolumn{2}{c}{Barab\'asi-Albert} \\
        \cmidrule{2-3}  \cmidrule{5-6}
         Model & Test AUC & Runtime (s) && Test AUC & Runtime (s) \\
         \midrule
         GCN (constant input) & .497$\std{.06}$ & .058$\std{.00}$ && .705$\std{.01}$ & .048$\std{.00}$ \\
         SignNet & .498$\std{.00}$ & .120$\std{.00}$ && .707$\std{.00}$ & .095$\std{.00}$   \\
         $V_{i, :}^\top V_{j, :}$ & .570$\std{.01}$ & .010$\std{.01}$ && .597$\std{.01}$ & .008$\std{.00}$  \\
         $\mathrm{MLP}(V_{i, :} \odot V_{j, :})$ & .614$\std{.02}$ & .050$\std{.00}$ && .651$\std{.03}$ & .040$\std{.00}$ \\
         Sign Equivariant & 
 \bf .751$\std{.00}$ & .063$\std{.00}$ && \bf .773$\std{.01}$ & .054$\std{.00}$ \\
         \bottomrule
    \end{tabular}
    }
    \vspace{-10pt}
    \label{tab:link_pred}
\end{table}

Our theoretical results in Section~\ref{sec: applications} predict benefits of sign equivariance 
in various tasks:
link prediction in nearly symmetric graphs, orthogonally equivariant simulations in n-body problems, and node clustering with positional information. Next, we probe these suggested benefits empirically.

\subsection{Link Prediction in Nearly Symmetric Graphs.} 

We begin with a synthetic link prediction task, which is carefully controlled to test the theoretically foreseen benefits of sign equivariance  explained in Section~\ref{sec:expressive_pos_embed}. 
With the intuition of Figure~\ref{fig:aut_nodes} we first either generate an Erd\H os-R\'enyi~\citep{erdHos1960evolution} or Barab\'asi-Albert~\citep{barabasi1999emergence} random graph $H$ of 1000 nodes.
Then we form a larger graph $G$ that contains two disjoint copies of $H$, along with 1000 uniformly-randomly added edges (both between and within copies of $H$). 
Without the random edges, each node in one copy of $H$ is automorphic to the corresponding node in the other copy, so we expect many nodes to be nearly automorphic with the randomly added edges.

In Table~\ref{tab:link_pred}, we show the link prediction performance of several models that learn structural edge representations. The methods that use eigenvectors have a sign invariant final prediction for each edge. GCN~\citep{kipf2016semi} where the node features are all ones 
and SignNet~\citep{lim2023sign} both completely fail on the Erd\H os-R\'enyi task (these two models map automorphic nodes to the same embedding), while our sign equivariant model outperforms all methods. We also try two eigenvector baselines that maintain node positional information, but do not update eigenvector representations: taking the dot product $V_{i, :}^\top V_{j, :}$ to be the logit of a link existing, or learning a simple decoder $\mathrm{MLP}(V_{i, :} \odot V_{j, :})$. Both perform substantially worse than our sign equivariant model, which shows that updating eigenvector representations is important here. Further, the sign equivariant model takes comparable runtime to GCN, and is significantly faster than SignNet. This is because we use networks of the form $v \mapsto v \odot \mathrm{MLP}(|v|)$ in these experiments instead of the full SignNet-based model in~\eqref{eq:no_perm_layer}.
See Appendix~\ref{appendix:link_pred_exp} for more~details.

\subsection{Orthogonal Equivariance in n-body Problems}

\begin{figure}[ht]
    \vspace{-5pt}
    \centering
    \includegraphics[width=.45\columnwidth]{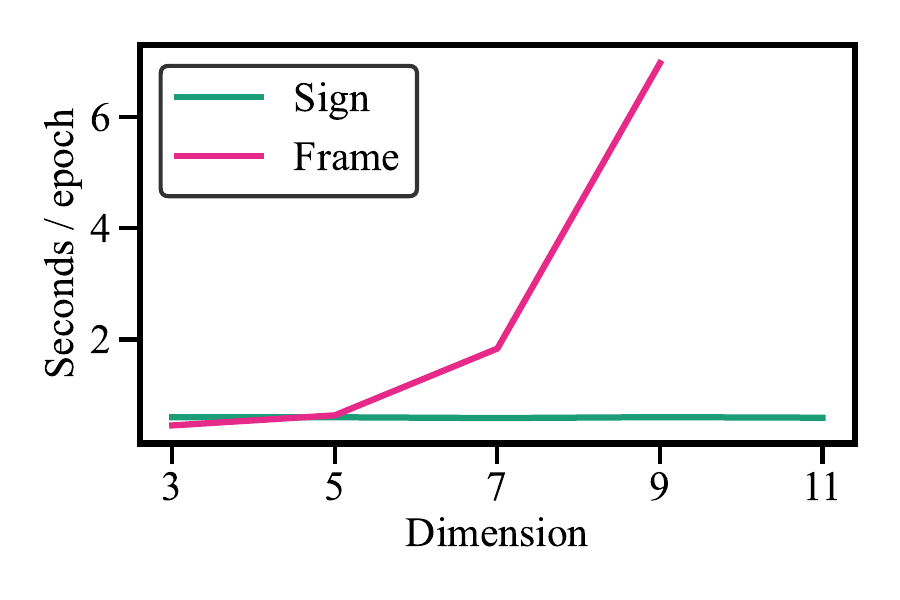}
    \includegraphics[width=.45\columnwidth]{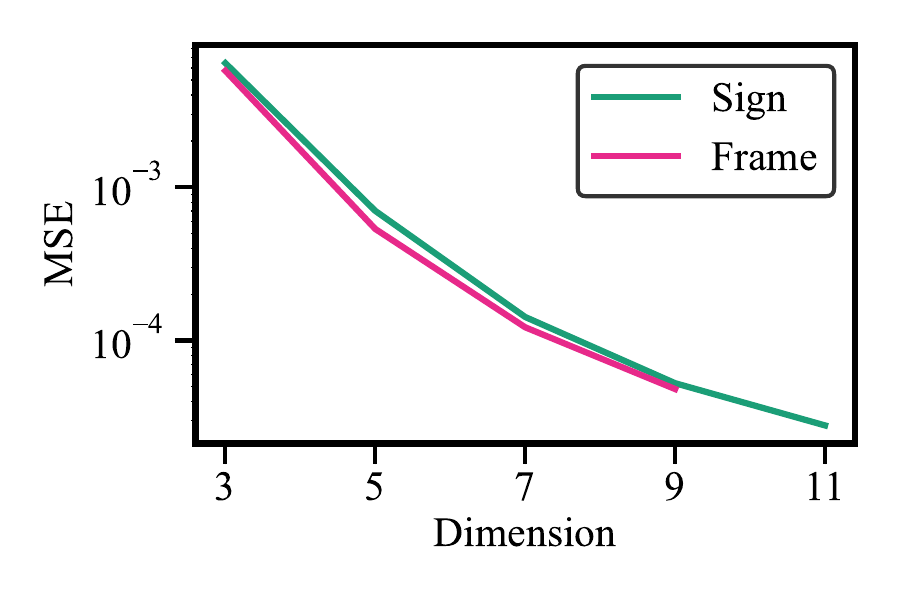}
    \vspace{-10pt}
    \caption{Sign equivariant model versus frame averaging model~\citep{puny2021frame} for n-body experiments in varying dimensions. Lower $y$-axis is better for both plots. (Left) The runtime of frame averaging increases exponentially in dimension while the sign equivariant runtime is approximately constant. Frame averaging runs out of memory on $d=11$. (Right) The error of the sign equivariant model is very similar to that of frame averaging.
    }
    \label{fig:nbody_results}
\end{figure}

In this section, we empirically test the ability of our sign equivariant models to parameterize orthogonally equivariant functions on point clouds, as outlined in Section~\ref{sec:orthog_equiv}. For this purpose, we consider simulating n-body problems, following the setup in \citet{fuchs2020se} and building on the code from \citet{puny2021frame}. To test the favorable scaling of our method in the dimension $d$ of the problem against the exponential $2^d$ scaling of frame averaging, we generalize this problem to general dimensions $d \geq 3$. We maintain the choice of $n=5$ particles, and generate new point clouds using the same procedure as in~\citet{fuchs2020se} (sampling random points and initial velocities in a general dimension $d$). We measure model performance via mean squared error (MSE). 
We use a DSS-based model that we describe in more detail in Appendix~\ref{appendix:nbody}.

Figure~\ref{fig:nbody_results} illustrates the runtime and MSE. The sign equivariant model scales well with dimension---the time-per-epoch is nearly constant as we increase the dimension. In contrast, frame averaging suffers from the expected exponential slowdown with dimension, and runs out of memory on a 32GB V100 GPU for $d=11$. Considering the MSE, the equivariant model's performance closely follows that of frame averaging, i.e., we only have a small loss in accuracy with much better scalability. 
For $d=3$,  the sign equivariant model has an MSE of .00646, compared to the .00575 of frame averaging~\citep{puny2021frame}. Additional $d=3$ comparisons to other baselines are included in Appendix~\ref{appendix:nbody}.

\subsection{Node Clustering with Positional Information}\label{sec:cluster}

As explained in Section~\ref{sec:expressive_pos_embed}, some applications on graph data call for positional node embeddings that can assign different representations to automorphic nodes. For instance, consider community detection or node clustering tasks on graphs, where a model makes a prediction for each node that assigns it to a cluster. Structural encodings are insufficient for this task, as there may be automorphic or nearly-automorphic nodes that are far apart in the graph but look alike in a structural encoding. Hence, node structural encodings would guide the model to assign these nodes to the same cluster, even though they  should belong to different clusters. As a concrete example, consider a graph of two clusters, and using Laplacian eigenvectors as positional encodings.
The first nontrivial eigenvector will tend to assign a positive sign to one cluster and a negative sign to the other cluster. Thus, the sign information in the eigenvectors is crucial, so we expect sign equivariant models to perform~well.

\begin{wraptable}{r}{6.9cm}
    \vspace{-10pt}
    \caption{Results on the CLUSTER node classification task, for which positional information is needed.
    We compare different SOTA and Laplacian eigenvector-based methods.}
    {\scriptsize
    \begin{tabular}{lcc}
    \toprule
        Model & Test Acc. (\%)  \\
         \midrule
         GCN \citep{kipf2016semi} & $68.498_{\pm 0.976}$ \\
         GIN \citep{xu2018powerful} & $64.716_{\pm 1.553}$ \\
         GAT \citep{velivckovic2018graph} & $70.587_{\pm 0.447}$ \\
         GatedGCN \citep{bresson2017residual} & $73.840_{\pm 0.326}$ \\
         SAN \citep{kreuzer2021rethinking} & $76.691_{\pm 0.650}$ \\
         K-Subgraph SAT \citep{chen2022structure} & $77.856_{\pm 0.104}$ \\
         EGT \citep{hussain2022global} & $\mathbf{79.232}_{\pm 0.348}$ \\
         GPS \citep{rampasek2022recipe} & $78.016_{\pm 0.180}$ \\
        \midrule
         \emph{Eigenvector Methods (GPS base model)} &  \\
         No PE & $77.423_{\pm 0.241}$ \\
         LapPE~\citep{dwivedi2020benchmarking} & $77.250_{\pm 0.280}$ \\
         PEG \citep{wang2022equivariant} & $77.945_{\pm 0.310}$ \\
         SignNet \citep{lim2023sign} & $77.442_{\pm 0.102}$ \\
         Sign Equivariant (ours) &  $\mathbf{78.201}_{\pm 0.118}$ \\
         \bottomrule
    \end{tabular}
    }
    \vspace{-5pt}
    \label{tab:cluster_main}
\end{wraptable}

We test models on the CLUSTER dataset~\citep{dwivedi2020benchmarking} for semi-supervised node clustering (viewed as node classification) in synthetic graphs.
In these experiments, we build on the empirically well-performing GraphGPS model~\citep{rampasek2022recipe}, and incorporate our sign equivariant models to update eigenvector representations within the version of GraphGPS that uses PEG~\citep{wang2022equivariant} to process positional encodings. See Appendix~\ref{appendix:cluster} for more experimental details.

As seen in Table~\ref{tab:cluster_main}, our sign equivariant models outperform all of the other GraphGPS-based eigenvector methods. Moreover, we achieve the second best performance across all methods, showing that sign equivariant models can indeed achieve the theoretically expected benefits in this setting.

\section{Related Work}

\paragraph{Structural and Positional Representations.} Especially for link prediction, the need for structural node-pair representations that are not obtained from structural node representations has been discussed in several works~\citep{srinivasan2019equivalence, zhang2021labeling, cotta2023causal}. As such, various methods have been developed for learning structural node-pair representations that incorporate node positional information. SEAL and other labeling-trick based methods~\citep{zhang2018link, zhang2021labeling} use added node features depending on the node-pair that we want a representation of. This is empirically successful in many tasks, but typically requires a separate subgraph extraction and forward pass through a GNN for each node-pair under consideration.
Distance encoding~\citep{li2020distance} uses relative distances between nodes to capture positional information.
PEG~\citep{wang2022equivariant} similarly maintains positional information by using eigenvector distances between nodes in each layer of a GNN, but does not update eigenvector representations.
Identity-aware GNNs~\citep{you2021identity} and Neural Bellman-Ford Networks~\citep{zhu2021neural} learn pair representations by conditioning on a source node from the pair.

\paragraph{Eigenvectors as Graph Positional Encodings.} When using eigenvectors of graphs as node positional encodings for graph models like GNNs and Graph Transformers, many works have noted the need to address the sign ambiguity of the eigenvectors. This is often done by encouraging sign invariance through data augmentation---the signs of the eigenvectors are chosen randomly in each iteration of training~\citep{dwivedi2020benchmarking, dwivedi2022graph, kreuzer2021rethinking, mialon2021graphit, kim2022pure, he2022generalization, muller2023attending}. In contrast, SignNet~\citep{lim2023sign} enforces exact sign invariance, by processing eigenvectors with a sign invariant neural architecture; this approach has been taken by some recent works~\citep{rampasek2022recipe, geisler2023transformers, murphy2023efficiently}.

\paragraph{Equivariant Neural Network Design.} Equivariant neural network architectures have been proposed for various types of data and symmetry groups. A common paradigm is to interleave equivariant linear maps and equivariant pointwise nonlinearities~\citep{wood1996representation, cohen2016group, cohen2017steerable,  
 ravanbakhsh2017equivariance, maron2018invariant, kondor2018generalization, finzi2021practical, bronstein2021geometric, pearce2022brauer}; this is often used when the group acts as some subset of the permutation matrices. However, the sign group does not act as permutation matrices, and as we explained above we did not find a way to make this approach expressive for sign equivariant models. More similarly to our approach, many equivariant machine learning works heavily leverage invariant or equivariant polynomials (or other equivariant nonlinear functions). These works include polynomials as operations within a network~\citep{thomas2018tensor, puny2023equivariant}, add polynomials as features~\citep{yarotsky2022universal, villar2021scalars}, build networks that take a similar form to equivariant polynomials~\citep{villar2021scalars}, and/or analyze neural network expressive power by determining which equivariant polynomials a given architecture can compute~\citep{zaheer2017deep, segol2019universal, maron2019universality,  maron2020learning, chen2020can, dym2021on, puny2023equivariant}. 

\section{Conclusion}

In this work, we identify and study an important method of respecting the symmetries of eigenvector data---sign equivariant models. For multi-node representation tasks, link prediction, and orthogonally equivariant tasks, sign equivariance provides a natural inductive bias; in contrast, we show that sign invariant models are provably limited in these tasks. To develop sign equivariant neural networks, we analytically characterize the sign equivariant polynomials, and then define neural networks that parameterize functions of similar form. Our neural networks are thus expressive, and inherit universal approximation guarantees of the equivariant polynomials. In several experiments, we show that our neural networks can indeed achieve the theoretically predicted benefits of sign equivariant models.

\paragraph{Limitations and Future Work.}

While we developed sign equivariant architectures in this work, we did not explore basis-change equivariant architectures, which would have the desired symmetries for inputs with repeated eigenvalues. As eigenvalue multiplicities are known to occur in many real-world graphs~\citep{lim2023sign}, future work in this area could be useful. Further, we give evidence that sign equivariance could help in some node-level and multi-node-level prediction tasks on graphs, but we do not have theoretical reason to believe that sign equivariance could help in graph-level representation tasks, which for instance are common in molecule processing. Our theoretical results are focused on expressive power, but we do not have results on other properties that are important for learning, such as optimization~\citep{xu2021optimization}, stability~\citep{wang2022equivariant, huang2023stability}, or generalization~\citep{keriven2023functions}. Finally, while we can prove universality of our models in the non-permutation-equivariant setting, we do not know of the exact expressive power in the permutation equivariant setting. \citet{lim2023sign} also faces this issue for sign invariant models; future work on analyzing and possibly improving the expressive power of these models --- if they are not universal --- is promising.

\subsubsection*{Acknowledgments}

 We would like to thank Yaron Lipman for contributing significantly early on in this work.
We would like to thank  Maks Ovsjanikov for noting that basis invariant models give automorphic nodes the same representation, and also Johannes Lutzeyer and Michael Murphy for helpful comments. We would also like to thank the reviewers of the Physics4ML workshop at ICLR 2023 for helpful feedback and close reading. DL is supported by an NSF Graduate Fellowship. SJ acknowledges support from NSF AI Institute NSF CCF-2112665, NSF Award 2134108, and Office of Naval Research Grant N00014-20-1-2023 (MURI ML-SCOPE).

\bibliography{refs}
\bibliographystyle{plainnat}

\appendix

\clearpage

\section{Applications of Sign Equivariance}

\subsection{Improving Invariant Eigenvector Networks}\label{appendix:improve_invariant}

Neural networks that are invariant to eigenvector symmetries have been shown to empirically improve graph learning models and achieve theoretically high expressive power. SignNet~\citep{lim2023sign}, a sign invariant neural network, takes the form 
\begin{equation}
    f(v_1, \ldots, v_k) = \rho( \phi(v_1) + \phi(-v_1), \ldots, \phi(v_k) + \phi(-v_k))
\end{equation}
for neural networks $\rho$ and $\phi$. This directly enforces invariant representations, without any intermediate equivariant representations. However, many successful invariant models first have many equivariant layers before a final invariant operation as equivariant layers are more expressive: this includes convolutional neural networks~\citep{lecun1989handwritten}, message passing graph neural networks~\citep{gilmer2017neural}, invariant graph networks~\citep{maron2018invariant}, and group convolutional neural networks~\citep{cohen2016group}. Thus, sign equivariant layers may lead to better sign invariant networks. Moreover, sign equivariant layers may improve on other aspects of SignNet, such as expressiveness of node features (Proposition~\ref{prop:automorphic_inv}) and efficiency (Appendix~\ref{appendix:efficiency})

\subsection{Efficiency Gains from Sign Equivariant Networks}\label{appendix:efficiency}

Here, we show that our sign equivariant models can reduce the complexity of equivariant or invariant networks for two different types of applications. Throughout, we consider functions $f: \RR^{n \times k} \to \RR^{n \times k}$, and we consider our permutation equivariant and sign equivariant DSS-based architecture from Section~\ref{sec:sign_equiv_dss}.

The time cost (in floating point operations) per layer of our DSS-based model is $\mc O(n(kd + d^2))$, where $d$ is the maximum hidden dimension of the MLP and we assume constant depth MLPs. To see this, note that we can precompute $\sum_{j=1}^n V_{j, :}$, so that each $\sum_{j \neq i} V_{j, :}$ can be computed in constant time by subtracting $V_{i, :}$ from the total sum. Then for each of the $n$ rows, the MLPs require $\mc O(kd + d^2)$ to evaluate matrix multiplications. In this process, we only form tensors of size $\mc O(n(k+d))$, as the inputs and outputs are of size $\mc O(nk)$, and the hidden layers of the MLPs form tensors of size $\mc O(nd)$.

\subsubsection{Efficient Orthogonally Equivariant Networks}

Consider the case of $O(k)$ equivariant models $f: \RR^{n \times k} \to \RR^{n \times k}$ such that $f(XQ) = f(X)Q$ for all orthogonal matrices $Q \in O(k)$. There are many orthogonally equivariant neural architectures that are specialized to the special case of $k=3$, which is very useful for applications in the physical sciences~\citep{thomas2018tensor, fuchs2020se}. Here we consider models that directly work for general dimension $k$.

Frame averaging approaches~\citep{puny2021frame, atzmon2022frame} require $2^k$ forward passes of a base network $f_\theta$, one for each sign flip of the principal components. Letting their base network be a permutation equivariant DeepSets~\citep{zaheer2017deep}, this means that they require ${\mc O(n(kd + d^2) 2^k)}$ time to evaluate their model, where $d$ is the hidden dimension of the base model. Note that this has an extra exponential $2^k$ factor compared to our $\mc O(n(kd + d^2))$ cost.

Another general approach with universality guarantees comes from~\citet{villar2021scalars}, who analyze invariant polynomials to develop equivariant architectures. However, their method for $O(k)$ invariance or equivariance requires forming $XX^\top$, an $n \times n$ matrix. Thus, the complexity is at least $\mc O(n^2)$, which is a problem in applications, since oftentimes $n$ is much larger than $k$. Variants of their method do not need to compute all $\mc O(n^2)$ inner products, but it is unclear how to maintain permutation equivariance when doing this.

\subsubsection{Efficient Sign Invariant Networks}

Consider again the form of SignNet~\citep{lim2023sign}, $f(V) = \rho([\phi(v_i) + \phi(-v_i)]_{i=1, \ldots, k})$. In the permutation equivariant version, e.g. when $\phi$ is a DeepSets~\citep{zaheer2017deep} or a message passing neural network~\citep{gilmer2017neural}, $\phi$ maps from $\RR^{n} \to \RR^{n \times d}$, where $d$ is the hidden dimension. Thus, computing $\phi(v_i) + \phi(-v_i)$ for all $k$ vectors $v_i$ require an $\mc O(nkd)$ sized tensor to be formed (even if the output space of $\phi$ is $\RR^n$, a vectorized implementation computes all $\phi(v_i) + \phi(-v_i)$ in two batched inference calls to $\phi$, which would require $\mc O(nkd)$ sized intermediate tensors). This is a multiplicative factor larger than the sign equivariant requirement of $\mc O(n(k+d))$ sized tensors. Moreover, it would take $\mc O(nkd^2)$ time to compute $\phi(v_i) + \phi(-v_i)$ for each $i$, which is a multiplicative factor larger than the $\mc O(n(kd + d^2))$ time for the sign equivariant architecture.

\subsection{Potential Societal Impacts}

We do not foresee direct societal impacts from our work. This project is primarily theoretical and aims to improve models for two general application areas: multi-node representation learning and orthogonal equivariant models. Potential societal impacts may arise in downstream applications that may be affected by general progress in geometric machine learning, such as social network analysis and recommender systems. These two applications are known to have negative societal impacts in certain circumstances, so care must be taken in future related work to avoid major negative consequences.

\subsection{Edge Representations and Link Prediction}\label{appendix:link_pred}

\subsubsection{Sign Invariant Link Prediction Decoders}

Here, we present an ansatz for universal permutation invariant and sign invariant functions for $n=2$, that is $f: \RR^{2 \times k} \to \RR^{d_{\mrm{out}}}$. Note that SignNet is only known to be universal for such functions for $n=1$, where there are no permutation symmetries~\citep{lim2023sign}.

We will parameterize such functions as
\begin{equation}
    f(v_1, \ldots v_k) = \varphi\left(v_1 \odot v_1, v_1 \odot \mathrm{rev}(v_1), \ldots, v_k \odot v_k, v_k \odot \mathrm{rev}(v_k)\right).
\end{equation}
Here, $\mathrm{rev}: \RR^2 \to \RR^2$ reverses the vector, so $\mathrm{rev}(a)_1 = a_2$ and $\mathrm{rev}(a)_2 = a_1$. Moreover, $\varphi: \RR^{2 \times 2k} \to \RR^{d_{\mathrm{out}}}$ is a permutation invariant neural network, so $\varphi(PX) = \varphi(X)$ for all $2\times 2$ permutation matrices $P$. Note that it is easy to parameterize permutation invariant functions $\varphi$ in a maximally expressive way, e.g. via DeepSets~\citep{zaheer2017deep}. Now, we show that this parameterization is universal:

\begin{proposition}
    Functions $f: \RR^{2 \times k} \to \RR^{d_{\mathrm{out}}}$ of the above form are permutation invariant and sign invariant, and they universally approximate permutation invariant and sign invariant functions.
\end{proposition}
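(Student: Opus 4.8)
The plan is to verify the two claimed symmetry properties directly and then reduce universality to a known universal approximation result for permutation invariant functions (DeepSets).

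First I would check invariance. For sign invariance, note that if we replace $v_j$ by $s_j v_j$ with $s_j \in \{-1,1\}$, then $s_j v_j \odot s_j v_j = s_j^2 (v_j \odot v_j) = v_j \odot v_j$, and similarly $s_j v_j \odot \mathrm{rev}(s_j v_j) = s_j^2 (v_j \odot \mathrm{rev}(v_j)) = v_j \odot \mathrm{rev}(v_j)$; so the argument of $\varphi$ is unchanged, hence $f$ is sign invariant. For permutation invariance (with respect to the $2\times 2$ permutation group acting on the two rows), observe that the elementwise product commutes with row permutation, $P(a \odot b) = (Pa)\odot(Pb)$, and that $\mathrm{rev}$ is precisely the action of the nontrivial permutation on $\RR^2$: if $P$ is the swap, then $\mathrm{rev}(Pa) = a$ and $Pa \odot P\,\mathrm{rev}(a) = Pa \odot \mathrm{rev}(Pa) = P(a \odot \mathrm{rev}(a))$ — so each pair $(v_j \odot v_j,\ v_j \odot \mathrm{rev}(v_j))$ is, as an unordered pair of vectors in $\RR^2$, mapped by $P$ to the pair $(Pv_j \odot Pv_j,\ (Pv_j)\odot \mathrm{rev}(Pv_j))$ up to the simultaneous row permutation $P$. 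Since $\varphi$ is itself permutation invariant on $\RR^{2\times 2k}$, applying $P$ to all rows of its input leaves the output fixed, and this is exactly the transformation induced on the argument when we permute the rows of the $v_j$'s. Hence $f(Pv_1,\ldots,Pv_k) = f(v_1,\ldots,v_k)$.

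Second I would prove universality. Given a target continuous function $g: \RR^{2\times k} \to \RR^{d_{\mathrm{out}}}$ that is permutation invariant and sign invariant, I want to show it factors through the map $\Psi: (v_1,\ldots,v_k) \mapsto (v_1\odot v_1, v_1 \odot \mathrm{rev}(v_1), \ldots, v_k \odot v_k, v_k \odot \mathrm{rev}(v_k)) \in \RR^{2 \times 2k}$ up to the residual permutation symmetry, i.e. that there is a continuous permutation invariant $\tilde g$ on the image with $g = \tilde g \circ \Psi$. The key lemma is that $\Psi$ separates orbits: if $\Psi(v) = \Psi(w)$ then $w$ is obtained from $v$ by some combination of sign flips $s_j$ on the columns and a single common row permutation $P$. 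To see the coordinatewise statement for a single vector $a = (a_1,a_2)\in\RR^2$: from $a_1^2, a_2^2, a_1 a_2$ one recovers $\{a_1,a_2\}$ up to a global sign (the two ordered solutions of the system $x^2 = a_1^2, y^2 = a_2^2, xy = a_1a_2$ are $\pm(a_1,a_2)$), which is exactly the sign-flip and row-swap ambiguity. Doing this jointly across $j=1,\ldots,k$: the row-swap must be applied consistently to all columns (the system only pins down each column up to an independent sign and a shared reversal because $\varphi$'s own permutation invariance further quotients by the swap), so $\Psi(v)=\Psi(w)$ iff $v,w$ lie in the same orbit of the group generated by per-column sign flips and the global row swap — which is precisely the group $g$ is invariant under. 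Then $\tilde g$ is well-defined on $\Psi(\RR^{2\times k})$, continuous there (e.g. by a Tietze-type / quotient-map argument, using that $\Psi$ is a proper quotient onto a closed set, or by noting the orbit space is nicely embedded), and permutation invariant on $\RR^{2\times 2k}$ after extension; finally DeepSets universally approximates $\tilde g$, so $\varphi \circ \Psi$ approximates $g$ on any compact set.

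The main obstacle I anticipate is the orbit-separation / well-definedness step: carefully pinning down exactly which group $\Psi$ quotients by, and arguing that this matches the symmetry group of $g$ — in particular that the row swap is forced to act globally (not per-column) because it is $\varphi$, not $\Psi$, that absorbs it. One must also handle the continuous extension of $\tilde g$ from the (possibly non-convex, semialgebraic) image $\Psi(\RR^{2\times k})$ to all of $\RR^{2\times 2k}$ so that DeepSets applies; this is routine via Tietze extension but needs the image to be closed and the descended map to be continuous, which follows because $\Psi$ is a closed map onto its image and a quotient map onto the orbit space. The algebraic identities (squaring, $\mathrm{rev}$ commuting with $\odot$) are all elementary and I would not belabor them.
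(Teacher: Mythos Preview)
Your invariance checks are fine. The universality strategy is sound in outline but contains an internal inconsistency that you should fix: you assert that ``$\Psi(v)=\Psi(w)$ iff $v,w$ lie in the same orbit of the group generated by per-column sign flips and the global row swap,'' yet a few lines later you (correctly) say that it is $\varphi$, not $\Psi$, that absorbs the row swap. The first statement is false. For a single column $a=(a_1,a_2)$, the data $(a_1^2,a_2^2,a_1a_2)$ determine the \emph{ordered} pair $(a_1,a_2)$ up to a global sign only; there is no swap ambiguity in $\Psi$ itself, and indeed $\Psi(Pv)=P\Psi(v)\neq \Psi(v)$ in general. The clean decomposition is: $\Psi$ is sign invariant and separates exactly the sign orbits, while $\Psi$ is permutation \emph{equivariant}; hence a sign-invariant $g$ descends to $\tilde g$ on $\mathrm{im}(\Psi)$, and permutation invariance of $g$ together with equivariance of $\Psi$ forces $\tilde g$ to be permutation invariant on $\mathrm{im}(\Psi)$. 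Then Tietze plus symmetrizing over the two-element group gives a permutation-invariant continuous extension (the image is $S_2$-stable, so symmetrization does not alter values there), and DeepSets finishes. With this correction your route works.

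The paper proceeds differently and more concretely. It observes that the columns of $\Psi(v)$ are precisely the entries of the rank-one matrices $v_j v_j^\top$ (the $v_j\odot v_j$ column gives the diagonal, the $v_j\odot\mathrm{rev}(v_j)$ column gives the off-diagonal, which is constant along its two rows). It then invokes the decomposition theorem from \cite{lim2023sign} that any continuous sign-and-permutation-invariant $h$ can be written as $h(v_1,\ldots,v_k)=\rho(\phi(v_1v_1^\top),\ldots,\phi(v_kv_k^\top))$, and \emph{explicitly} defines $\varphi$ by rearranging the entries of its input into the $2\times 2$ blocks and applying $\rho\circ(\phi,\ldots,\phi)$. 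Permutation invariance of this $\varphi$ is checked by a direct matrix computation (conjugation by $P$ on each block), yielding exact equality $f=h$ rather than an approximation, with no Tietze or quotient-map argument needed. Your approach is more self-contained (it does not import the SignNet decomposition) but pays for that with the topological extension step; the paper's approach is shorter and constructive but leans on the cited result.
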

\begin{proof}
    Invariance of $f$ is easy to see; let $P$ be a $2 \times 2$ permutation matrix and $s_i \in \{-1, 1\}$ for each $i$. Then
    \begin{align}
        f(Pv_1 s_1, \ldots, P v_k s_k) & = \varphi\left((Pv_1 s_1) \odot (Pv_1 s_1), (Pv_1 s_1) \odot \mrm{rev}(Pv_1 s_1), \ldots \right) \\
        & = \varphi\left(P(v_1 s_1 \odot v_1 s_1), P(v_1 s_1 \odot \mrm{rev}(v_1 s_1)), \ldots \right)\\
        & = \varphi\left(P(v_1  \odot v_1 ), P(v_1  \odot \mrm{rev}(v_1 )), \ldots \right)\\
        & = \varphi\left(v_1 \odot v_1, v_1 \odot \mrm{rev}(v_1), \ldots \right)\\
        & = f(v_1, \ldots, v_k),
    \end{align}
    where the second to last inequality is by permutation invariance of $\varphi$. Next, we show universal approximation.

    Let $h: \RR^{2 \times k} \to \RR^{d_{\mrm{out}}}$ be a continuous permutation invariant and sign invariant function. Then by the decomposition theorem in \cite{lim2023sign}, we can write
    \begin{equation}
        h(v_1, \ldots, v_k) = \rho(\phi(v_1 v_1^\top), \ldots, \phi(v_k v_k^\top)),
    \end{equation}
    for continuous functions $\rho$ and $\phi$. As a composition of continuous functions, the function $\psi: B \subseteq \RR^{2 \times 2k} \to \RR^{d_\mrm{out}}$ given by $\psi(A_1, \ldots, A_k) = \rho(\phi(A_1), \ldots, \phi(A_k))$ is continuous, where $B$ is the subset of $\RR^{2 \times 2k}$ consisting of $(v_1 v_1^\top, \ldots, v_k v_k^\top)$ such that each $v_i \in \RR^2$. Note that $\psi$ is permutation invariant on $B$, in the sense that for any $2 \times 2$ permutation matrix $P$, we have
    \begin{equation}
        \psi(P A_1 P^\top, \ldots, PA_k P^\top) = \psi(A_1, \ldots, A_k),
    \end{equation}
    because if $v_i v_i^\top = A_i$, then
    \begin{equation}
        \psi(P A_1 P^\top, \ldots, PA_k P^\top ) = h(Pv_1, \ldots, Pv_k) = h(v_1, \ldots, v_k) = \psi(A_1, \ldots, A_k),
    \end{equation}
    by permutation invariance of $h$.
    
   Now, we define our permutation invariant function $\varphi: C \subseteq \RR^{2 \times 2k} \to \RR^{d_{\mrm{out}}}$, on the domain
    \begin{equation}
        C = \{ \left[v_1 \odot v_1, v_1 \odot \mrm{rev}(v_1), \ldots, v_k \odot v_k, v_k \odot \mrm{rev}(v_k) \right]: v_i \in \RR^2  \}.
    \end{equation}
    We define $\varphi$ by
    \begin{equation}
        \varphi(A) = \psi\left(\begin{bmatrix}
            A_{1,1} & A_{2,2}\\
            A_{2,2} & A_{2,1}
        \end{bmatrix}, \begin{bmatrix}
            A_{1,3} & A_{2,4} \\
            A_{2,4} & A_{2,3}
        \end{bmatrix}, \ldots, \begin{bmatrix}
            A_{1, 2k-1} & A_{2, 2k}\\
            A_{2, 2k} & A_{2, 2k-1}
        \end{bmatrix} \right).
    \end{equation}
    To see that $\varphi$ is permutation invariant, we need only consider the case where $P = \begin{bmatrix} 0 & 1\\ 1 & 0\end{bmatrix}$, in which case
    \begin{align}
         \varphi(PA) & = \psi\left(\begin{bmatrix}
            A_{2,1} & A_{1,2}\\
            A_{1,2} & A_{1,1}
        \end{bmatrix}, \begin{bmatrix}
            A_{2,3} & A_{1,4} \\
            A_{1,4} & A_{1,3}
        \end{bmatrix}, \ldots, \begin{bmatrix}
            A_{2, 2k-1} & A_{1, 2k}\\
            A_{1, 2k} & A_{1, 2k-1}
        \end{bmatrix} \right) \\      
        & = \psi\left(P \begin{bmatrix}
            A_{1,1} & A_{2,2}\\
            A_{2,2} & A_{2,1}
        \end{bmatrix} P^\top, P \begin{bmatrix}
            A_{1,3} & A_{2,4} \\
            A_{2,4} & A_{2,3}
        \end{bmatrix}P^\top, \ldots, P\begin{bmatrix}
            A_{1, 2k-1} & A_{2, 2k}\\
            A_{2, 2k} & A_{2, 2k-1}
        \end{bmatrix}P^\top \right)\\
            & = \psi\left(\begin{bmatrix}
            A_{1,1} & A_{2,2}\\
            A_{2,2} & A_{2,1}
        \end{bmatrix}, \begin{bmatrix}
            A_{1,3} & A_{2,4} \\
            A_{2,4} & A_{2,3}
        \end{bmatrix}, \ldots, \begin{bmatrix}
            A_{1, 2k-1} & A_{2, 2k}\\
            A_{2, 2k} & A_{2, 2k-1}
        \end{bmatrix} \right) \quad \text{($\psi$ perm. inv.)}\\
        & = \varphi(A),
    \end{align}
    where in the second equality, we use the fact that $A_{2,2j} = A_{1,2j}$, $j=1, \ldots, k$ for $A \in C$, because $A_{2,2j} = (v_j \odot \mrm{rev}(v_j))_2 = (v_j \odot \mrm{rev}(v_j))_1 = A_{1,2j}$ for some $v_j \in \RR^2$. Moreover, $\varphi$ is clearly continuous and sign invariant. Defining $f: \RR^{2 \times k} \to \RR^{d_{\mrm{out}}}$ using this $\varphi$, we compute that
    \begin{align}
    f(v_1, \ldots v_k) & = \varphi\left(v_1 \odot v_1, v_1 \odot \mathrm{rev}(v_1), \ldots, v_k \odot v_k, v_k \odot \mathrm{rev}(v_k)\right)\\
    & = \psi\left(\begin{bmatrix}
        v_{1,1}^2 & v_{1,1} v_{1,2}\\
        v_{1,1} v_{1,2} & v_{1,2}^2
    \end{bmatrix}, \ldots, \begin{bmatrix}
        v_{k,1}^2 & v_{k,1} v_{k,2}\\
        v_{k,1} v_{k,2} & v_{k,2}^2
    \end{bmatrix} \right)\\
    & = \psi\left(v_1 v_1^\top, \ldots, v_k v_k^\top \right)\\
    & = h(v_1, \ldots, v_k),
    \end{align}
    so we are done.

    If $\varphi$ instead comes from a universally approximating class of permutation invariant neural networks (rather than being an arbitrary continuous permutation invariant function), then on a compact domain we can get $\epsilon$ approximation of $f$ to $h$ by letting $\varphi$ approximate $\psi$ to $\epsilon$ accuracy.
\end{proof}

\subsubsection{Proof of Proposition \ref{prop:automorphic_inv}}

\newtheorem*{prop:automorphic_inv}{Proposition~\ref{prop:automorphic_inv}}
\begin{prop:automorphic_inv}
Let $f: \RR^{n \times k} \to \RR^{n \times \dout}$ be a permutation equivariant function, and let $V = [v_1, \ldots, v_k] \in \RR^{n \times k}$ be $k$ orthonormal eigenvectors of an adjacency matrix $A$. Let nodes $i$ and $j$ be automorphic, and let $z_i$ and $z_j \in \RR^{\dout}$  be their embeddings, i.e, the $i$th and $j$th row of $Z = f(V)$.
\begin{itemize}[leftmargin=11pt, topsep=2pt]
\setlength{\itemsep}{1pt}
    \item If $f$ is sign invariant and the eigenvalues associated with the $v_l$ are simple and distinct, then $z_i = z_j$.
    \item  If $f$ is basis invariant and $v_1, \ldots, v_k$ are a basis for some number of eigenspaces of $A$ then $z_i = z_j$. 
\end{itemize}

\end{prop:automorphic_inv}
\begin{proof}
    We only prove the basis invariance claim, as the sign invariance claim is a special case; basis invariance is sign invariance when eigenvalues are distinct.

    Let $P \in \RR^{n \times n}$ be a permutation matrix associated to an automorphism that maps node $i$ to node $j$, so $PAP^\top = A$ and $P e_i = e_j$, where $e_l$ is the $l$th standard basis vector. Let $V_t = [v_{r_1}, \ldots, v_{r_{d_t}}]$ be the matrix whose columns are the eigenvectors $v_{r_l}$ that are associated to eigenvalue $\lambda_i$. The columns of $V_t$ are thus an orthonormal basis for the eigenspace associated to $\lambda_t$. Note that for any of these eigenvectors, we have
    \begin{equation}
        A(Pv_{r_l}) = PAP^\top (Pv_{r_l}) = PAv_{r_l} = P\lambda_i v_{r_l} = \lambda_t (Pv_{r_l}),
    \end{equation}
    so $Pv_{r_l}$ is also an eigenvector of $A$ with eigenvalue $\lambda_t$. As $P$ is orthogonal, note that $Pv_{r_1}, \ldots, Pv_{r_{d_t}}$ is still an orthonormal basis of the eigenspace. Thus, there exists an orthogonal matrix $Q_t \in \RR^{d_t \times d_t}$ such that $PV_t = V_tQ_t$---see \citet{lim2023sign}.

    Repeat the above argument to get such a $Q_t$ for each of the eigenbases $V_1, \ldots, V_l$. We can then see that
    \begin{align*}
        z_j & = f(V_1, \ldots, V_l)_{j, :}\\
            & = f(V_1 Q_1, \ldots, V_l Q_l)_{j, :}  & \text{basis invariance}\\
            & = f(P V_1, \ldots, P V_l)_{j, :}  & \text{choice of $Q_t$}\\
            & = (Pf(V_1, \ldots, V_l))_{j, :} & \text{permutation equivariance}\\
            & = f(V_1, \ldots, V_l)_{i, :} & \text{choice of $P$}\\
            & = z_i.
    \end{align*}
    So we are done.
\end{proof}

\subsection{Sign Invariance and Structural Node or Node-Pair Encodings}\label{appendix:structural}

In this section, we show that when the eigenvalues $\lambda_1, \ldots, \lambda_k$ are distinct, then sign invariant functions of the orthonormal eigenvectors $v_1, \ldots, v_k$ give structural node or node-pair representations. This can also be generalized in a straightforward way to larger tuples of nodes beyond pairs, though we only consider nodes and node-pairs for ease of exposition. 
First, we give formal definitions.

\begin{definition}[Structural Representations~\citep{srinivasan2019equivalence}]
Let $A \in \RR^{n \times n}$ be the adjacency matrix of a graph on node set $\{1, \ldots, n\}$.

A function $f: \RR^{n \times n} \to \RR^n$ is a node structural representation if $f(PAP^\top) = P f(A)$ for all $n \times n$ permutation matrices $P$.

A function $f: \RR^{n \times n} \to \RR^{n \times n}$ is a node-pair structural representation if $f(PAP^\top) = P f(A) P^\top$ for all $n \times n$ permutation matrices $P$.
\end{definition}

Importantly, these structural representations are permutation equivariant functions of adjacency matrices, not arbitrary matrices. For each adjacency matrix $A$, let $V(A) = [v_1(A), \ldots, v_k(A)]$ be a choice of orthonormal eigenvectors for the first $k$ eigenvalues $\lambda_1(A), \ldots, \lambda_k(A)$. We assume in this section that these first $k$ eigenvalues are distinct for all $A$ under consideration, so $V(A)$ is defined up to sign flips. Let $h: \RR^{n \times k} \to \RR^n$ be a permutation equivariant function of sets, so $h(PX) = Ph(X)$ for all permutations matrices $P$. Then of course $h(PV(A)) = Ph(V(A))$, but this does not make $h$ a node structural encoding. This is because $A \mapsto h(V(A))$ is in general not a well-defined function of the adjacency, since the choice of $V(A)$ is not well-defined (the choices of sign are arbitrary). If we constrain $h$ to not depend on the signs (sign invariance), or to depend on the signs in a predictable way (sign equivariance), then we can compute structural node or node-pair encodings from eigenvectors. 

\newcommand{\fnode}{f_{\mathrm{node}}}
\newcommand{\fequiv}{f_{\mathrm{equiv}}}
\newcommand{\qnode}{q_{\mathrm{node}}}
\newcommand{\qpair}{q_{\mathrm{pair}}}

We capture these observations in the below proposition.
First, we define three types of functions:
\begin{itemize}
    \item Let $\fnode: \RR^{n \times k} \to \RR^n$ be sign invariant and permutation equivariant; that is, $\fnode(Pv_1s_1, \ldots, Pv_k s_k) = P\fnode(v_1, \ldots, v_k)$ for $s_i \in \{-1, 1\}$ and $P$ a permutation matrix.
    \item Let $\fdecode: \RR^{2 \times k} \to \RR$ be sign invariant; that is, $\fdecode(Sz_i, Sz_j) = \fdecode(z_i, z_j)$ for $S \in \diag(\{-1, 1\}^k)$.
    \item Let $\fequiv: \RR^{n \times k} \to \RR^{n \times k}$ be a permutation equivariant and sign equivariant function; that is, $\fequiv(PV(A)S) = P\fequiv(V(A))S$ for $S \in \diag(\{-1, 1\}^k)$ and $P$ a permutation matrix.
\end{itemize}

\begin{proposition}
Let $\mc A \subseteq \RR^{n \times n}$ denote the matrices with distinct first-$k$ eigenvalues. For $A \in \mc A$, let $V(A) = [v_1(A), \ldots, v_k(A)]$ be a choice of  orthonormal eigenvectors of $A$, associated to the first-$k$ (distinct) eigenvalues $\lambda_1(A), \ldots, \lambda_k(A)$.  Then

(a) The map $\qnode: \mc A \to \RR^n$ given by $\qnode(A)_i = \fnode\left(\fequiv(V(A))\right)_i$ is well-defined and gives a structural node representation.

(b) The map $\qpair: \mc A \to \RR^{n \times n}$ defined by $\qpair(A)_{i,j} =  \fdecode\left(\fequiv(V(A))_{i, :}, \fequiv(V(A))_{j,:}\right)$ is well-defined and gives a structural node-pair representation.
\end{proposition}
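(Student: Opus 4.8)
The plan is to prove two things for each of the maps $\qnode$ and $\qpair$: first that the map is \emph{well-defined} (i.e.\ independent of the arbitrary choice of signs in $V(A)$), and second that it is \emph{permutation equivariant} in the appropriate (adjacency-conjugation) sense.

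\textbf{Well-definedness.} Since the first $k$ eigenvalues of $A \in \mc A$ are distinct and simple, any two valid choices of orthonormal eigenvector matrix differ by a sign diagonal: if $V(A)$ and $V'(A)$ are both valid, then $V'(A) = V(A) S$ for some $S \in \diag(\{-1,1\}^k)$. Now apply $\fequiv$, which is sign equivariant: $\fequiv(V(A)S) = \fequiv(V(A))S$. For part (a), feed this into $\fnode$, which is sign invariant, so $\fnode(\fequiv(V(A))S) = \fnode(\fequiv(V(A)))$; hence $\qnode(A)$ does not depend on the choice. For part (b), note that $\fequiv(V(A)S)_{i,:} = \big(\fequiv(V(A))S\big)_{i,:} = S\,\fequiv(V(A))_{i,:}$ (writing rows as column vectors in $\RR^k$ and letting $S$ act on them), and similarly for row $j$; then sign invariance of $\fdecode$ gives $\fdecode(S z_i, S z_j) = \fdecode(z_i, z_j)$, so $\qpair(A)_{i,j}$ is unchanged. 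This establishes well-definedness.

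\textbf{Permutation equivariance.} Fix a permutation matrix $P$ and let $A' = PAP^\top$. The key observation is that if $V(A)$ is a valid eigenvector matrix for $A$, then $P V(A)$ is a valid eigenvector matrix for $A'$ (same argument as in the proof of Proposition~\ref{prop:automorphic_inv}: $A'(Pv_l) = PAP^\top P v_l = PAv_l = \lambda_l (Pv_l)$, and orthonormality is preserved since $P$ is orthogonal). Because the map is well-defined, we may compute $\qnode(A')$ and $\qpair(A')$ using this particular representative $V(A') := P V(A)$. For (a): $\qnode(A')_i = \fnode(\fequiv(PV(A)))_i = \fnode(P\fequiv(V(A)))_i$ using permutation equivariance of $\fequiv$, and then $\fnode(P\,\cdot\,) = P\fnode(\,\cdot\,)$ by permutation equivariance of $\fnode$, giving $\qnode(A')_i = (P \qnode(A))_i$, i.e.\ $\qnode(PAP^\top) = P\qnode(A)$. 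For (b): $\fequiv(PV(A)) = P\fequiv(V(A))$, so row $i$ of this is row $P^{-1}(i)$ of $\fequiv(V(A))$ — more carefully, $\big(P\fequiv(V(A))\big)_{i,:} = \fequiv(V(A))_{\pi^{-1}(i),:}$ where $\pi$ is the permutation with $Pe_l = e_{\pi(l)}$. Then $\qpair(A')_{i,j} = \fdecode\big(\fequiv(V(A))_{\pi^{-1}(i),:}, \fequiv(V(A))_{\pi^{-1}(j),:}\big) = \qpair(A)_{\pi^{-1}(i),\pi^{-1}(j)} = (P\qpair(A)P^\top)_{i,j}$, which is exactly $\qpair(PAP^\top) = P\qpair(A)P^\top$.

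The steps are all short; the only thing requiring a little care is bookkeeping the row-permutation indices in part (b) and making sure the sign diagonal $S$ acts correctly on rows of $\fequiv(V(A))$ viewed as elements of $\RR^k$ (so that the "$f(VS) = f(V)S$" matrix form translates to "$S$ acting on each row as a vector"). I would state once, near the start, that since $S$ is diagonal, $(MS)_{i,:} = S\,(M_{i,:})$ for any matrix $M$, so that both the sign-invariance application and the permutation-equivariance application go through cleanly. No single step is a genuine obstacle — this proposition is essentially an immediate consequence of the definitions together with the two facts already used in Proposition~\ref{prop:automorphic_inv} (that $PV(A)$ is a valid eigenbasis for $PAP^\top$, and that two eigenbases for simple eigenvalues differ by a sign diagonal).
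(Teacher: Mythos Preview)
Your proposal is correct and follows essentially the same approach as the paper: establish well-definedness via sign equivariance of $\fequiv$ followed by sign invariance of $\fnode$/$\fdecode$, then establish permutation equivariance by choosing the representative $V(PAP^\top)=PV(A)$ and applying permutation equivariance of $\fequiv$ and $\fnode$ (with the same $\sigma^{-1}$ row-index bookkeeping for part (b)). If anything, your write-up is slightly more explicit than the paper's in noting that $(MS)_{i,:}=S\,M_{i,:}$ for diagonal $S$, which is a nice clarification.
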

Note that the identity mapping $V(A) \mapsto V(A)$ is permutation equivariant and sign equivariant, so using $\fnode$ or $\fdecode$ directly on eigenvectors also gives structural representations. The statement (b) means that our link prediction pipeline with sign equivariant node features and sign invariant decoding produces structural node-pair representations. 
\begin{proof}
   \textbf{Part (a)} We first show that $\qnode: \mc A \to \RR^n$ is well-defined. Suppose we had another choice of eigenvectors, so the eigenvectors we input are $V(A)S$ for some $S \in \diag(\{-1, 1\}^k)$. Then 
  \begin{equation}
    \fnode\left(\fequiv(V(A)S)\right) = \fnode\left(\fequiv(V(A))S\right) = \fnode\left(\fequiv(V(A)\right),
  \end{equation} 
  where the first equality is by sign equivariance, and the second equality by sign invariance. Thus, the value of $\qnode(A)$ is unchanged.

   Now, let $P$ be any permutation matrix. Then for each eigenvector $v_i(A)$, $i \in [k]$, we have $(PAP^\top)Pv_i(A) = PAv_i(A) = \lambda_i(A) Pv_i(A)$, so $Pv_i(A)$ is an eigenvector of $PAP^\top$ associated to $\lambda_i(A) = \lambda_i(PAP^\top)$. Hence, we denote $v_i(PAP^\top) = Pv_i(A)$ (the choice of sign does not matter as $q$ does not depend on the sign. Now, we have that
   \begin{align}
       \qnode(PAP^\top) & = \fnode\left(\fequiv(V(PAP^\top) )\right)\\
       & =  \fnode\left(\fequiv(P V(A) )\right)\\
       & =  P\fnode\left(\fequiv(V(A) )\right)\\
    & = P\qnode(A)
   \end{align}
   where the second to last equality is by permutation equivariance of $\fnode$ and $\fequiv$.

   \textbf{Part (b)} That $\qpair: \mc A \to \RR^{n \times n}$ is well-defined follows from a similar argument to the $\qnode$ case. Let $P$ be a permutation matrix, and $\sigma: [n] \to [n]$ its underlying permutation. We compute that
   \begin{align}
       \qpair(PAP^\top)_{i,j} & = \fdecode\left(\fequiv(V(PAP^\top))_{i, :}, \fequiv(V(PAP^\top))_{j, :}\right)\\
       & = \fdecode\left(\fequiv(PV(A))_{i, :}, \fequiv(PV(A))_{j, :}\right)\\
       & = \fdecode\left([P\fequiv(V(A))]_{i, :}, [P\fequiv(V(A))]_{j, :}\right)\\
       & = \fdecode\left(\fequiv(V(A))_{\sigma^{-1}(i), :}, \fequiv(V(A))_{\sigma^{-1}(j), :}\right)\\
       & = \qpair(A)_{\sigma^{-1}(i), \sigma^{-1}(j)}\\
       & = (P\qpair(A) P^\top)_{i,j}
   \end{align}
\end{proof}

\subsubsection{Sign Equivariance is Provably More Expressive for Link Prediction}\label{appendix:signeq_link_expressive}

Our arguments in Section~\ref{sec:expressive_pos_embed} and Figure~\ref{fig:aut_nodes} explain why we can expect sign equivariant models to be more powerful than sign invariant models in link prediction. To give a theoretically rigorous explanation, here we provide an example where sign equivariant models can provably compute more expressive link representations than sign invariant models.

Consider a cycle graph $C_{2k}$ for some even length $2k$, where $k \geq 3$. All nodes are automorphic in this graph, so any model based on structural node representations must assign the same representation to each node-pair. For instance, consider the eigenvalue $-2$ of the adjacency matrix, which is a simple eigenvalue with eigenvector $[1, -1, 1, -1, \ldots, 1, -1]$~\citep{lee1992eigenvector}. Then a sign invariant model will lose the sign information and map each node to the same encoding, which means that each node-pair will also have the same encoding. However, a sign equivariant model can preserve the sign of each node (for instance by learning the identity function). Then for any pair of nodes that are one hop away, it can take a dot product to compute the pair representation $-1$, whereas it can take a dot product between any nodes that are two hops away to compute the pair representation $1$. Of course, using more eigenvectors would allow for more complex representations to be computed.

\subsubsection{More on Sign Equivariance and Link Prediction}

Key to our method is the ability to update a positional node embedding in an equivariant way, which respects the graph symmetries. %
To elaborate, consider the definition of node positional encodings as samples from a permutation equivariant probability distribution over node features~\citep{srinivasan2019equivalence}. Laplacian eigenvector positional embeddings are samples from the distribution of orthonormal bases of the eigenspaces of the Laplacian.
Our sign equivariance based approach is possible because the randomness in Laplacian eigenvector positional encodings is exceptionally structured (consisting only of sign flips when eigenvalues are distinct). In contrast, a general way to obtain structural pair representations from node positional embeddings is to average some function over the randomness of the positional encoding (i.e., over many samples of the positional encoding)~\citep{srinivasan2019equivalence}, but this is highly expensive, often intractable, and introduces substantial variance into the learning procedure. For instance, one may have to average samples of the $n!$ assignments of unique node identifiers~\citep{murphy2019relational} or approximate an integral over Gaussian random features~\citep{abboud2020surprising}.

\subsection{Proof of Proposition \ref{prop:rotation_equiv}, Orthogonal Equivariance via Sign Equivariance}\label{appendix:orth_equiv}

\newtheorem*{prop:rotation_equiv}{Proposition~\ref{prop:rotation_equiv}}
\begin{prop:rotation_equiv}
    Consider a domain $\mathcal X \subseteq \RR^{n \times d}$ such that each $X \in \mathcal X$ has distinct covariance eigenvalues, and let $R_X$ be a choice of orthonormal eigenvectors of $\cov(X)$ for each $X \in \mathcal X$. If $h: \mathcal X \subseteq \RR^{n \times d} \to \RR^{n \times d}$ is sign equivariant, and if $f(X) = h(X R_X) R_X^\top$, then $f$ is well defined and orthogonally equivariant.

    Moreover, is $h$ is from a universal class of sign equivariant functions, then the $f$ of the above form universally approximate $O(k)$ equivariant functions on $\mc X$.
\end{prop:rotation_equiv}
\begin{proof}
    First, we show that $f$ is well defined. $R_X$ is only unique up to sign flips, as $R_X S$ is an orthonormal set of eigenvectors of $\cov(X)$ for $S \in \diag(\{-1, 1\}^k)$. However, no matter the choice of signs, $f(X)$ takes the same value, since
    \begin{align}
        h(XR_X S)(R_X S)^\top & = h(XR_X S)S^\top R_X^\top\\
        & = h(X R_X) SS^\top R_X^\top & \text{sign equivariance}\\
        & = h(X R_X)R_X^\top.
    \end{align}

    Next, we show that $f$ is $O(k)$ equivariant. Let $Q \in O(k)$ be any orthogonal matrix. Note that
    \begin{equation}
        \cov(XQ) = \left(XQ - \frac{1}{n}\bone \bone^\top XQ \right)^\top \left(XQ - \frac{1}{n}\bone \bone^\top XQ \right) = Q^\top \cov(X) Q.
    \end{equation}
    Thus, $Q^\top R_X$ is an orthonormal set of eigenvectors of $\cov(XQ)$. This means that there is a choice of signs $S \in \diag(\{-1, 1\}^k)$ such that $Q^\top R_X S = R_{XQ}$. Hence, we have that
    \begin{align}
        f(XQ) & = h(XQ R_{XQ}) R_{XQ}^\top\\
        & = h(XQ Q^\top R_{X} S) (Q^\top R_{X}S)^\top\\
        & = h(X  R_{X} )SS^\top R_{X}^\top Q & \text{sign equivariance}\\
        & = h(X R_{X} ) R_{X}^\top Q\\
        & = f(X) Q^\top,
    \end{align}
    so $f$ is $O(k)$ equivariant.

    \textbf{Universal Approximation.} Our proof of the universality of this class of functions builds on the proof of the universality of frame averaging~\citep{puny2021frame}. Let $\ftarget$ be a continuous $O(k)$ equivariant function and let $\epsilon > 0$ be a desired approximation accuracy. Then $\ftarget$ is also sign equivariant (as the sign matrices $S \in \diag(\{-1, 1\}^k)$ are orthogonal). 
    
    Hence, by sign equivariant universality, we can choose a sign equivariant $h$ such that $\norm{h(X) - \ftarget(X)} < \epsilon$ for all $X \in \mc X$ (where $\norm{\cdot}$ is the Frobenius norm). Define the $O(k)$ equivariant $f(X) = h(XR_X)R_X^\top$. Then for all $X \in \mc X$ we have that
\begin{align}
    \norm{\ftarget(X) - f(X)} & = \norm{\ftarget(X) - h(X R_X)R_X^\top}\\
    & = \norm{\ftarget(X )R_X R_X^\top - h(X R_X) R_X^\top} & \text{$R_X$ orthogonal}\\
    & = \norm{\ftarget(XR_X) R_X^\top - h(X R_X) R_X^\top} & \text{orthogonal equivariance}\\
    & = \norm{\ftarget(XR_X) - h(X R_X)} & \text{$R_X$ orthogonal}\\
    & < \epsilon.
\end{align}
So $f$ approximates $\ftarget$ within $\epsilon$ accuracy on $\mc X$, and we are done.
\end{proof}

\section{Sign Equivariant Linear Maps}

\subsection{Sign Equivariant Linear Map Characterization}\label{appendix:signeq_linear}

We first prove our result characterizing the form of the equivariant linear maps from $\RR^{n \times k} \to \RR^{n' \times k}$.

\newtheorem*{lem:signeq_linear}{Lemma~\ref{lem:signeq_linear}}
\begin{lem:signeq_linear}
    A linear map $W: \RR^{n \times k} \to \RR^{n' \times k}$ is sign equivariant if and only if it can be written as
    \begin{equation}
        W(X) = \begin{bmatrix}W_1 X_1 \  \ldots \  W_k X_k \end{bmatrix}
    \end{equation}
    for some linear maps $W_1, \ldots, W_k : \RR^{n} \to \RR^{n'}$, where $X_i \in \RR^n$ is the $i$th column of $X \in \RR^{n \times k}$.
\end{lem:signeq_linear}
\begin{proof}
    For one direction, suppose $W$ can be written as in \eqref{eq:signeq_linear}. To see that $W$ is sign equivariant, note that for any $S \in \diag(\{-1, 1\}^k)$, we have
    \begin{align}
        W(XS) = \begin{bmatrix}s_1 W_1 X_1  \  \ldots \   s_k W_k X_k   \end{bmatrix} = \begin{bmatrix}W_1 X_1 \  \ldots \  W_k X_k \end{bmatrix} S = W(X)S.
    \end{align}
    For the other direction, let $W$ be a sign equivariant linear map. For any $i' \in [n']$ and $j' \in [k]$, we can write the action of $W$ as
    \begin{align}
    W(X)_{i', j'} = \sum_{i=1}^n \sum_{j=1}^k W^{i,j}_{i', j'} X_{i,j},
    \end{align}
    where $W^{i,j}_{i', j'} \in \RR$ are coefficients representing the linear map. Let $c \neq j'$ be a column that is not $j'$. Further, for any row $l \in [n]$, let $\tilde X \in \RR^{n \times k}$  be such that $\tilde X_{l, c} = 1$, and $\tilde X$ is zero elsewhere. Then we have that
    \begin{align}
        W(\tilde{X})_{i', j'} = W^{l,c}_{i', j'}.
    \end{align}
    Now, let $S \in \diag(\{-1, 1\}^k)$ have a $-1$ in the $j'$th column and a $1$ elsewhere. Then $\tilde X S = \tilde X$. This implies that
    \begin{align}
     W^{l,c}_{i', j'}  & = W(\tilde{X})_{i', j'} \\
     &  = W(\tilde{X} S)_{i', j'} \\
     & = -W(\tilde{X})_{i', j'}\\
     & = -W^{l,c}_{i', j'},
    \end{align}
    where in the second to last equality we used sign equivariance. This implies that $W^{l,c}_{i', j'} = 0$.

    Hence, for any $i' \in [n'], j' \in [k']$, we have that $W(X)_{i', j'}$ only depends on $X_{j'}$, so we are done.
\end{proof}

\subsection{Sign Equivariant Linear Maps between Tensor Representations}\label{appendix:signeq_linear_tensor}

Since the sign equivariant linear maps from $\RR^{n \times k} \to \RR^{n' \times k}$ are very weak, we now study sign equivariant linear maps between higher order tensor representations, as past work has done for other groups~\citep{maron2018invariant, maron2019universality, finzi2021practical}. In particular, we will consider representations $\RR^{k^m}$ for natural numbers $m$. The action of $s \in \{-1, 1\}^k$ on $V \in \RR^{k^m}$ is as follows:
\begin{equation}
    (s \cdot V)_{i_1, \ldots, i_m} = s_{i_1} \cdots s_{i_m} V_{i_1, \ldots, i_m},
\end{equation}
for $i_1, \ldots, i_m \in [k]$. Note that when $m=1$, $s$ acts as a diagonal matrix $\mathrm{Diag}(s)$, and for general $m$ the element $s$ acts as a tensor product $\mathrm{Diag}(s)^{\otimes m}$.

A sign equivariant neural network from $\RR^k \to \RR^k$ could then be designed by choosing intermediate tensor orders $m_1, \ldots, m_L$, and interleaving sign equivariant linear maps and sign equivariant nonlinearities to map $\RR^k \to \RR^{k^{m_1}} \to \ldots \to \RR^{k^{m_L}} \to \RR^k$.
However, we now prove a result showing that there are no sign equivariant linear maps between many pairs of tensor representations.
\begin{proposition}
    If $m_1 + m_2$ is odd, then the only sign equivariant linear map $L: \RR^{k^{m_1}} \to \RR^{k^{m_2}}$ is the zero map.
\end{proposition}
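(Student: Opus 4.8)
The plan is to express $L$ in coordinates and show that sign equivariance forces every entry of the representing tensor to vanish; since the zero map is trivially sign equivariant, this suffices. Write $L\colon \RR^{k^{m_1}} \to \RR^{k^{m_2}}$ via coefficients $L^{j_1,\dots,j_{m_2}}_{i_1,\dots,i_{m_1}} \in \RR$ so that $L(V)_{j_1,\dots,j_{m_2}} = \sum_{i_1,\dots,i_{m_1} \in [k]} L^{j_1,\dots,j_{m_2}}_{i_1,\dots,i_{m_1}} V_{i_1,\dots,i_{m_1}}$. Applying the two group actions to both sides of $L(s\cdot V) = s\cdot L(V)$ and matching the coefficient of each free entry $V_{i_1,\dots,i_{m_1}}$, equivariance is equivalent to the scalar conditions
\begin{equation}
    \bigl(s_{i_1}\cdots s_{i_{m_1}} - s_{j_1}\cdots s_{j_{m_2}}\bigr)\, L^{j_1,\dots,j_{m_2}}_{i_1,\dots,i_{m_1}} = 0 \qquad \text{for all } s \in \{-1,1\}^k
\end{equation}
and all index tuples. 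So it is enough to show that, for any fixed index tuples, some $s$ makes the bracketed factor nonzero.

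The main step is to reduce the bracket to a parity statement. Since $s_a^2 = 1$, the product $(s_{i_1}\cdots s_{i_{m_1}})(s_{j_1}\cdots s_{j_{m_2}})$ equals $\prod_{a \in [k]} s_a^{c_a} = \prod_{a:\,c_a \text{ odd}} s_a$, where $c_a$ is the number of occurrences of the value $a$ among the combined list $i_1,\dots,i_{m_1},j_1,\dots,j_{m_2}$. Because $\sum_{a} c_a = m_1 + m_2$ is odd by hypothesis, at least one $c_a$ is odd. Choosing $s$ with $s_{a^\star} = -1$ at one such index $a^\star$ and $s_b = 1$ elsewhere yields $(s_{i_1}\cdots s_{i_{m_1}})(s_{j_1}\cdots s_{j_{m_2}}) = -1$, hence $s_{i_1}\cdots s_{i_{m_1}} = -\,s_{j_1}\cdots s_{j_{m_2}}$; since each side is $\pm 1$, the bracket in the displayed equation is nonzero. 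Therefore every coefficient $L^{j_1,\dots,j_{m_2}}_{i_1,\dots,i_{m_1}}$ must be zero, i.e. $L = 0$.

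The only point needing a little care — really the "hardest" step, though it is routine — is the coefficient-matching reduction: one should note that the standard basis tensors span $\RR^{k^{m_1}}$ and that both $L$ and the two actions are linear, so $L(s\cdot V) = s\cdot L(V)$ holding for all $V$ is equivalent to it holding on basis tensors, which yields exactly the entrywise conditions above. After that, the argument is an elementary parity count and no genuine obstacle arises.
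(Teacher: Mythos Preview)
Your proof is correct and follows essentially the same approach as the paper: both write $L$ in coordinates, derive from sign equivariance the constraint that each coefficient is fixed by the action of every $s \in \{-1,1\}^k$ on the combined $(m_1+m_2)$-index tensor, and then use the parity observation that some index value must occur an odd number of times to produce an $s$ that flips the sign and forces the coefficient to vanish. The only cosmetic difference is that the paper phrases the constraint as $s \cdot L = L$ on the tensor $L$, while you phrase it as the bracket $s_{i_1}\cdots s_{i_{m_1}} - s_{j_1}\cdots s_{j_{m_2}}$ being nonzero for some $s$; these are equivalent.
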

\begin{proof}
    Let $L: \RR^{k^{m_2} \times k^{m_1}}$ be the matrix associated with a sign equivariant linear map from $\RR^{k^{m_1}} \to \RR^{k^{m_2}}$. This means that for an input $V \in \RR^{k^{m_1}}$ and indices $i_1, \ldots, i_{m_2} \in [k]$ we have that
    \begin{equation}
        (LV)_{i_1, \ldots, i_{m_2}} = \sum_{j_1, \ldots, j_{m_1} = 1}^k L_{i_1, \ldots, i_{m_2}, j_1, \ldots, j_{m_1}} V_{j_1, \ldots, j_{m_1}}.
    \end{equation}
    
    Then by sign equivariance, we have that
    \begin{equation}
        L (s \cdot V) = s \cdot (L V),
    \end{equation}
    for $s \in \{-1, 1\}^k$, which means that for all $i_1, \ldots, i_{m_2} \in [k]$,
    \begin{align}
        & \sum_{j_1, \ldots, j_{m_1} = 1}^k L_{i_1, \ldots, i_{m_2}, j_1, \ldots, j_{m_1}} s_{j_1} \cdots s_{j_{m_1}} V_{j_1, \ldots, j_{m_1}}\\ 
        = &  s_{i_1} \cdots s_{i_{m_2}}\sum_{j_1, \ldots, j_{m_1} = 1}^k L_{i_1, \ldots, i_{m_2}, j_1, \ldots, j_{m_1}}  V_{j_1, \ldots, j_{m_1}}.
    \end{align}
    This implies that
    \begin{align}
        & \sum_{j_1, \ldots, j_{m_1} = 1}^k L_{i_1, \ldots, i_{m_2}, j_1, \ldots, j_{m_1}} s_{i_1} \cdots s_{i_{m_2}} s_{j_1} \cdots s_{j_{m_1}} V_{j_1, \ldots, j_{m_1}}  \\
        = &  \sum_{j_1, \ldots, j_{m_1} = 1}^k L_{i_1, \ldots, i_{m_2}, j_1, \ldots, j_{m_1}}  V_{j_1, \ldots, j_{m_1}}.
    \end{align}
    As this holds for any input $V$, we have that $s \cdot L = L$. Choose some arbitrary indices $i_1, \ldots, i_{m_2}, j_1, \ldots, j_{m_1} \in [k]$. Since $m_1 + m_2$ is odd, then one of these values appears an odd number of times --- say it is some index $p \in [k]$. Let $s \in \{-1, 1\}^k$ have $s_i = 1$ for $i \neq p$ and $s_p = -1$. Then $s \cdot L = L$ implies that 
    \begin{equation}
    -L_{i_1, \ldots, i_{m_2}, j_1, \ldots, j_{m_1}} = L_{i_1, \ldots, i_{m_2}, j_1, \ldots, j_{m_1}}.
    \end{equation}
    Which implies that $L$ is zero at these indices, and hence $L=0$ everywhere, since these were arbitrarily chosen indices.
\end{proof}

This implies that we cannot map from $\RR^{k} \to \RR^{k^2}$ using sign equivariant linear maps. Thus, if we want to lift our order-one tensor input to higher tensor orders in a linear way, then we need to at least map to third order tensors in $\RR^{k^3}$. This requires substantial memory cost. Furthermore, we next show that even if we could map to third order tensors, learning representations of third order tensors with sign equivariant linear maps is expensive.

\begin{proposition}\label{prop:signeq_dimension}
    The dimension of the space of sign equivariant linear maps from $\RR^{k^{m_1}} \to \RR^{k^{m_2}}$ is 
    \begin{equation}
        \frac{1}{2^k} \sum_{s \in \{-1, 1\}^k} (s_1 + \ldots + s_k)^{m_1 + m_2}.
    \end{equation}
\end{proposition}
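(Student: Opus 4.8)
The plan is to identify the sign equivariant linear maps with an explicit set of unconstrained coefficients, count those coefficients, and then match the count to the claimed average via a standard parity identity on $\{-1,1\}^k$.

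First I would represent a linear map $L:\RR^{k^{m_1}}\to\RR^{k^{m_2}}$ by its coefficient array $L_{\mathbf i,\mathbf j}$, where $\mathbf i=(i_1,\dots,i_{m_2})\in[k]^{m_2}$ indexes the output and $\mathbf j=(j_1,\dots,j_{m_1})\in[k]^{m_1}$ the input, exactly as in the proof of the previous proposition. That computation (which never used the parity of $m_1+m_2$) shows that sign equivariance is equivalent to
\[
s_{i_1}\cdots s_{i_{m_2}}\,s_{j_1}\cdots s_{j_{m_1}}\,L_{\mathbf i,\mathbf j}=L_{\mathbf i,\mathbf j}\qquad\text{for all }s\in\{-1,1\}^k\text{ and all }\mathbf i,\mathbf j.
\]
For a fixed combined tuple $(\mathbf i,\mathbf j)$, let $c_v$ be the number of occurrences of $v\in[k]$ among its $m_1+m_2$ entries; then the scalar multiplier above equals $\prod_{v=1}^k s_v^{c_v}$. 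Since each equation constrains only the single coefficient it names, the system decouples: if every $c_v$ is even the multiplier is $1$ for all $s$ and the coefficient is free, whereas if some $c_v$ is odd, choosing $s$ with $s_v=-1$ and all other entries $+1$ forces that coefficient to vanish. Hence the dimension of the space of sign equivariant maps equals the number of tuples $(a_1,\dots,a_{m_1+m_2})\in[k]^{m_1+m_2}$ in which every value of $[k]$ occurs an even number of times.

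Next I would show that the claimed average computes exactly this count. Expanding the power and grouping terms by multiplicity gives
\[
(s_1+\dots+s_k)^{m_1+m_2}=\sum_{\mathbf a\in[k]^{m_1+m_2}}s_{a_1}\cdots s_{a_{m_1+m_2}}=\sum_{\mathbf a}\ \prod_{v=1}^k s_v^{c_v(\mathbf a)},
\]
where $c_v(\mathbf a)$ is the multiplicity of $v$ in $\mathbf a$. Averaging over $s$ and swapping the finite sums,
\[
\frac{1}{2^k}\sum_{s\in\{-1,1\}^k}(s_1+\dots+s_k)^{m_1+m_2}=\sum_{\mathbf a}\ \prod_{v=1}^k\Bigl(\tfrac12\sum\nolimits_{t\in\{-1,1\}}t^{\,c_v(\mathbf a)}\Bigr).
\]
Since $\tfrac12(1^c+(-1)^c)$ equals $1$ for $c$ even and $0$ for $c$ odd (with the convention $t^0=1$), the product is $1$ precisely when every $c_v(\mathbf a)$ is even and $0$ otherwise, so the average counts exactly the tuples with all-even multiplicities — the dimension found above.

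There is no real obstacle here; the only points requiring care are checking that the equivariance constraints act coefficient-by-coefficient (so that the free/forced-zero dichotomy genuinely computes the dimension) and applying the parity identity correctly, including the $c=0$ case. As a consistency check this recovers the previous proposition: if $m_1+m_2$ is odd, the multiplicities $c_v$ sum to an odd number and so cannot all be even, giving dimension $0$. One could instead avoid computation entirely by observing that the sign equivariant maps form the fixed subspace of $\mathrm{Hom}(\RR^{k^{m_1}},\RR^{k^{m_2}})$ under $G=\{-1,1\}^k$, whose dimension is $\tfrac{1}{|G|}\sum_{g\in G}\chi_{m_1}(g)\chi_{m_2}(g)$ with $\chi_m(s)=\mathrm{tr}\bigl(\diag(s)^{\otimes m}\bigr)=(s_1+\dots+s_k)^m$; the elementary argument above is fully self-contained.
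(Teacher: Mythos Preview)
Your proof is correct. It takes a genuinely different route from the paper's argument, though you essentially sketch the paper's approach in your final paragraph. The paper invokes the First Projection Formula from representation theory directly: the space of equivariant maps is the fixed subspace of $\mathrm{Hom}(\RR^{k^{m_1}},\RR^{k^{m_2}})$ under the group action, and its dimension is the average of the character $\mathrm{trace}(S^{\otimes(m_1+m_2)})$ over the group; the multiplicativity of trace under tensor products then gives $\mathrm{trace}(S)^{m_1+m_2}=(s_1+\cdots+s_k)^{m_1+m_2}$. Your main argument is instead fully elementary and combinatorial: you read off from the coefficient-by-coefficient constraints exactly which entries of $L$ are free, and then recognize the claimed average as counting those entries via the parity identity $\tfrac12\sum_{t\in\{-1,1\}}t^c=\mathbf{1}[c\text{ even}]$. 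Your approach is self-contained and makes the structure of the equivariant maps completely explicit (one sees a basis, not just a dimension), at the cost of a few more lines; the paper's is a two-line application of a standard tool but presupposes the representation-theoretic machinery.
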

\begin{proof}
    This is a direct consequence of the First Projection Formula: see \citet{fulton2013representation} Section 2.2. In particular, it can be seen that the dimension of this space is equal to
    \begin{equation}
        \frac{1}{2^k} \sum_{S \in \mathrm{Diag}(\{-1, 1\}^k)} \mathrm{trace}(S^{\otimes (m_1 + m_2)})
    \end{equation}
    by the First Projection Formula. Since $\mathrm{trace}(A \otimes B) = \mathrm{trace}(A) \cdot \mathrm{trace}(B)$, this is equal to 
    \begin{equation}
        \frac{1}{2^k} \sum_{S \in \mathrm{Diag}(\{-1, 1\}^k)} \mathrm{trace}(S)^{m_1 + m_2} = \frac{1}{2^k} \sum_{s \in \{-1, 1\}^k} (s_1 + \ldots + s_k)^{m_1 + m_2}.
    \end{equation}
\end{proof}

\begin{table}[ht]
    \centering
    \caption{Dimension of the space of sign equivariant linear maps between third-order tensor representations $\RR^{k^3} \to \RR^{k^3}$.}
    \begin{tabular}{cc}
    \toprule
        $k$ & Dimension  \\
        \midrule
         1 &  1\\
         2 & 32\\
         3 & 183\\
         4 & 544\\
         5 & 1,205\\
         6 & 2,256\\
         7 & 3,787\\
         8 & 5,888\\
         9 & 8,649 \\
         10 & 12,160 \\
         11 & 16,511 \\
         12 & 21,792\\
         13 & 28,093\\
         14 & 35,504\\
         15 & 44,115\\
         16 & 54,016\\
         17 & 65,297\\
         18 & 78,048\\
         19 & 92,359\\
         20 & 108,320\\
         \bottomrule
    \end{tabular}
    \label{tab:equiv_dimension}
\end{table}

Using Proposition~\ref{prop:signeq_dimension}, we have numerically computed the dimension of the space of sign equivariant linear maps from $\RR^{k^3} \to \RR^{k^3}$; see Table~\ref{tab:equiv_dimension}. The dimension appears to be $15k^3 - 30k^2 + 16k$ for $k$ up to 20. In particular, when $k=8$, we compute that the space of sign equivariant linear maps from $\RR^{k^3} \to \RR^{k^3}$ is of dimension 5888, which is already quite large.

\section{Characterization of Sign Equivariant Polynomials}\label{appendix:polys}

In this Appendix, we characterize the form of the sign equivariant polynomials. This is useful, because for a finite group, equivariant polynomials universally approximate equivariant continuous functions~\citep{yarotsky2022universal}; thus, if a model universally approximates equivariant polynomials, then it universally approximates equivariant continuous functions. Using equivariant polynomials to analyze or develop equivariant machine learning models has been done successfully in many contexts~\citep{zaheer2017deep, yarotsky2022universal, segol2019universal, dym2021on, maron2019universality, maron2020learning, villar2021scalars, dym2022low, puny2023equivariant}.

\subsection{Sign Invariant Polynomials $\RR^k \to \RR$}

Next, we characterize the form of sign invariant and equivariant polynomials.
For simplicity, we start with the case of sign invariant polynomials $p: \RR^k \to \RR$. The sign equivariant polynomials take a very similar form. We can write any polynomial from $\RR^k$ to $\RR$ in the form
\begin{equation}
    p(v) = \sum_{d_1, \ldots, d_k = 0}^D \tW_{d_1, \ldots, d_k} v_1^{d_1} \cdots v_k^{d_k}
\end{equation}
for some coefficients $\tW_{d_1, \ldots, d_k} \in \RR$ and some $D \in \NN$. Sign invariance tells us that for any $S = \diag(s_1, \ldots, s_k) \in \diag(\{-1, 1\}^k)$, we must have
\begin{equation}
   \sum_{d_1, \ldots, d_k = 0}^D \tW_{d_1, \ldots, d_k} v_1^{d_1} \cdots v_k^{d_k}  = p(v) = p(Sv) = \sum_{d_1, \ldots, d_k = 0}^D \tW_{d_1, \ldots, d_k} s_1^{d_1} \cdots s_k^{d_k} v_1^{d_1} \cdots v_k^{d_k}.
\end{equation}
This holds for any $v \in \RR^k$, so for all choices of $d_1, \ldots, d_k$ we must have
\begin{equation}
    \tW_{d_1, \ldots, d_k} = s_1^{d_1} \cdots s_k^{d_k} \tW_{d_1, \ldots, d_k}, \quad \text{for all } (s_1, \ldots, s_k) \in \{-1, 1\}^k.
\end{equation}
Note that $s_i^{d_i} = 1$ if $d_i$ is an even number. Hence, there are no constraints on $\tW_{d_1, \ldots, d_k}$ if all $d_i$ are even. On the other hand, suppose $d_j$ is odd for some $j$. Let $s_i = 1$ for $i \neq j$ and $s_j = -1$. Then the constraint says that $\tW_{d_1, \ldots, d_k} = -\tW_{d_1, \ldots, d_k}$, so we must have $\tW_{d_1, \ldots, d_k} = 0$. To summarize, we have
\begin{equation}
    \tW_{d_1, \ldots, d_k} = \begin{cases}
    \text{free} & \text{$d_i$ even for each $i$} \\
    0 & \text{else}
    \end{cases}
\end{equation}
Where being free means that the coefficient may take any value in $\RR$. Thus, any sign invariant $p$ only has terms where each variable $v_i$ is raised to an even power. It is also easy to see that any polynomial $p$ where each variable $v_i$ is raised to only even powers is sign invariant, so we have the following proposition:
\begin{proposition}
    A polynomial $p: \RR^k \to \RR$ is sign invariant if and only if it can be written
    \begin{equation}
        p(v) = \sum_{d_1, \ldots, d_k=0}^D \tW_{d_1, \ldots, d_k} v_1^{2d_1} \cdots v_k^{2d_k},
    \end{equation}
    for some coefficients $\tW_{d_1, \ldots, d_k} \in \RR$ and $D \in \NN$.

    In other words, $p$ is sign invariant if and only if there exists a polynomial $q: \RR^k \to \RR$ such that $p(v) = q(v_1^2, \ldots, v_k^2)$.
\end{proposition}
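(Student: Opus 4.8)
The plan is to prove both directions directly from the monomial expansion of $p$, following the discussion that precedes the statement. For the \emph{if} direction, I would take any $p$ of the form $\sum_{d_1,\ldots,d_k=0}^{D} \tW_{d_1,\ldots,d_k}\, v_1^{2d_1}\cdots v_k^{2d_k}$ and substitute $v \mapsto Sv$ for an arbitrary $S = \diag(s_1,\ldots,s_k) \in \diag(\{-1,1\}^k)$; since $s_i^{2d_i} = (s_i^2)^{d_i} = 1$, every monomial is fixed, so $p(Sv) = p(v)$ and $p$ is sign invariant. This is immediate and requires no further work.

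For the \emph{only if} direction, I would write an arbitrary polynomial as $p(v) = \sum_{d_1,\ldots,d_k=0}^{D} \tW_{d_1,\ldots,d_k}\, v_1^{d_1}\cdots v_k^{d_k}$ and invoke sign invariance: for each $S \in \diag(\{-1,1\}^k)$ the functions $v \mapsto p(Sv)$ and $v \mapsto p(v)$ agree on all of $\RR^k$, which forces equality of coefficients, i.e. $\tW_{d_1,\ldots,d_k} = s_1^{d_1}\cdots s_k^{d_k}\,\tW_{d_1,\ldots,d_k}$ for every sign vector and every multi-index. Choosing, for each coordinate $j$ with $d_j$ odd, the vector $s$ with $s_j = -1$ and $s_i = 1$ otherwise, gives $\tW_{d_1,\ldots,d_k} = -\tW_{d_1,\ldots,d_k}$, hence $\tW_{d_1,\ldots,d_k} = 0$. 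Thus only multi-indices with all $d_i$ even survive; reindexing $d_i = 2d_i'$ yields the stated form, and the equivalent formulation $p(v) = q(v_1^2,\ldots,v_k^2)$ follows by setting $q(w_1,\ldots,w_k) = \sum \tW_{d_1,\ldots,d_k}\, w_1^{d_1}\cdots w_k^{d_k}$.

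The only nontrivial step — and hence the ``main obstacle'', though it is a mild one — is passing from ``$p(Sv) = p(v)$ as functions on $\RR^k$'' to ``the coefficients agree''. This relies on the standard fact that a polynomial over an infinite field whose associated function is identically zero has all coefficients zero, equivalently that distinct monomials are linearly independent as functions on $\RR^k$; everything else is bookkeeping. I would also remark that only finitely many sign patterns are actually needed — one per coordinate — so the characterization is obtained from a finite set of linear constraints on the coefficients.

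Finally, I would note that this argument is the $n=1$, $n'=1$ scalar-target case of the coefficient-constraint technique used later for sign equivariant polynomials (Theorem~\ref{thm:no_perm_poly}), so it also serves as a warm-up for that more involved characterization.
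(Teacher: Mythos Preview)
Your proposal is correct and follows essentially the same route as the paper: write $p$ in its monomial expansion, impose $p(Sv)=p(v)$, equate coefficients to obtain $\tW_{d_1,\ldots,d_k}=s_1^{d_1}\cdots s_k^{d_k}\,\tW_{d_1,\ldots,d_k}$, and pick $s$ with a single $-1$ in coordinate $j$ to kill any coefficient with $d_j$ odd. Your added remarks on the linear independence of monomials and on this being the scalar warm-up for the coefficient-constraint method behind Theorem~\ref{thm:no_perm_poly} are accurate and align with how the paper frames the argument.
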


\subsection{Sign Equivariant Polynomials $\RR^k \to \RR^k$}

The case of sign equivariant polynomials $p: \RR^k \to \RR^k$ is very similar. For $l \in [k]$, the $l$th output dimension of any polynomial $p: \RR^k \to \RR^k$ can be written
\begin{equation}
    p(v)_l = \sum_{d_1, \ldots, d_k=0}^D \tW_{d_1, \ldots, d_k}^{(l)} v_1^{d_1} \cdots v_k^{d_k},
\end{equation}
where $\tW_{d_1, \ldots, d_k}^{(l)} \in \RR$ are coefficients (note the extra $l$ index, so there are $k$ times more coefficients than in the invariant case). By sign equivariance, we have
\begin{align}
    s_l \cdot p(v)_l & = p(Sv)_l\\ 
    s_l \cdot \sum_{d_1, \ldots, d_k=0}^D \tW_{d_1, \ldots, d_k}^{(l)} v_1^{d_1} \cdots v_k^{d_k}
     & = \sum_{d_1, \ldots, d_k=0}^D \tW_{d_1, \ldots, d_k}^{(l)} s_1^{d_1} \cdots s_k^{d_k} v_1^{d_1} \cdots v_k^{d_k}.
\end{align}
As this holds for all inputs $v \in \RR^k$, we have the following constraints on the coefficients:
\begin{align}
    s_l \tW_{d_1, \ldots, d_k}^{(l)} & = s_1^{d_1} \cdots s_k^{d_k} \tW_{d_1, \ldots, d_k}^{(l)} \\
     \tW_{d_1, \ldots, d_k}^{(l)} & = s_l \cdot s_1^{d_1} \cdots s_k^{d_k} \tW_{d_1, \ldots, d_k}^{(l)},
\end{align}
where we use the fact that $s_l = 1/s_l$ since $s_l \in \{-1, 1\}$. If $d_j$ is odd for $j \neq l$, then similarly to the invariant case, we can take $s_i = 1$ for $i \neq j$ and $s_j = -1$ in the above equation to see that $\tW_{d_1, \ldots, d_k}^{(l)} = 0$. If $d_l$ is even, then $d_l + 1$ is odd, so we have that $\tW_{d_1, \ldots, d_k}^{(l)} = 0$ by the same argument. Thus, we must have
\begin{equation}
    \tW_{d_1, \ldots, d_k}^{(l)} = \begin{cases}
    \text{free} & \text{$d_l$ odd, and $d_i$ even for each $i \neq l$} \\
    0 & \text{ else}
    \end{cases}.
\end{equation}
Thus, the $l$th entry $p(v)_l$ only contains monomials of the term $v_1^{2d_1} \cdots v_l^{2d_l+1} \cdots v_k^{2d_k}$, where each term besides $v_l$ is raised to an even power. We can factor out a $v_l$ and write such terms as $v_l \cdot v_1^{2d_1} \cdots v_k^{2d_k}$. %
It is also easy to see that any polynomial with monomials only of this form is sign equivariant.
Thus, we have proven Proposition~\ref{prop:ktok_poly}.

\begin{proposition}\label{prop:ktok_poly}
    A polynomial $p: \RR^k \to \RR^k$ is sign equivariant if and only if it can be written
    \begin{equation}
        p(v)_l = v_l \cdot \left( \sum_{d_1, \ldots, d_k=0}^D \tW_{d_1, \ldots, d_k}^{(l)} v_1^{2d_1} \cdots v_k^{2d_k}\right).
    \end{equation}
In vector format, $p$ is sign equivariant if and only if it can be written as $p(v) = v \odot p_{\mrm{inv}}(v)$ for a sign invariant polynomial $p_{\mrm{inv}}: \RR^k \to \RR^k$.
\end{proposition}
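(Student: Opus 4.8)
The plan is to mimic the coefficient-matching argument carried out just above for the invariant case $\RR^k \to \RR$, now applied to each output coordinate separately. First I would expand an arbitrary polynomial $p \colon \RR^k \to \RR^k$ in the monomial basis, writing its $l$-th component as $p(v)_l = \sum_{d_1,\ldots,d_k=0}^D \tW^{(l)}_{d_1,\ldots,d_k} v_1^{d_1}\cdots v_k^{d_k}$. Since the monomials $v_1^{d_1}\cdots v_k^{d_k}$ are linearly independent as functions on $\RR^k$, the sign equivariance identity $p(Sv)_l = s_l\, p(v)_l$, required for every $S = \diag(s_1,\ldots,s_k)$ with $s \in \{-1, 1\}^k$, can be turned into the pointwise coefficient constraint $\tW^{(l)}_{d_1,\ldots,d_k} = s_l\, s_1^{d_1}\cdots s_k^{d_k}\, \tW^{(l)}_{d_1,\ldots,d_k}$ for all such $s$, using $s_l = 1/s_l$.

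Next I would read off which coefficients are forced to vanish. If $d_j$ is odd for some $j \neq l$, choosing $s$ with $s_j = -1$ and $s_i = 1$ otherwise gives $\tW^{(l)}_{d} = -\tW^{(l)}_{d}$, hence $\tW^{(l)}_{d} = 0$; and if $d_l$ is even, choosing $s$ with $s_l = -1$ and $s_i = 1$ otherwise makes $s_l s_l^{d_l} = -1$, again forcing $\tW^{(l)}_{d} = 0$. So the only surviving monomials in $p(v)_l$ have $d_l$ odd and $d_i$ even for $i \neq l$; writing $d_l = 2e_l + 1$ and factoring out $v_l$ gives exactly the claimed form $p(v)_l = v_l \cdot \big(\sum \tW^{(l)}_{e_1,\ldots,e_k}\, v_1^{2e_1}\cdots v_k^{2e_k}\big)$. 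For the converse direction I would simply substitute a polynomial of this form into $p(Sv)_l$ and observe that the even powers are unchanged by the signs while the single extra factor $v_l$ contributes $s_l$, verifying equivariance directly.

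Finally, for the vector restatement, I would define $p_{\mathrm{inv}} \colon \RR^k \to \RR^k$ by $p_{\mathrm{inv}}(v)_l = \sum \tW^{(l)}_{e_1,\ldots,e_k}\, v_1^{2e_1}\cdots v_k^{2e_k}$, so that $p = v \odot p_{\mathrm{inv}}$ holds by construction; each component of $p_{\mathrm{inv}}$ is a polynomial in which every variable appears only to even powers, hence sign invariant by the preceding proposition characterizing sign invariant polynomials $\RR^k \to \RR$ (applied coordinatewise), and conversely $v \odot p_{\mathrm{inv}}(v)$ is sign equivariant whenever $p_{\mathrm{inv}}$ is sign invariant, again by direct substitution. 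I do not anticipate a genuine obstacle here, since the argument is entirely bookkeeping, but the one point that deserves care is justifying that equating the two sides of the equivariance identity is legitimate at the level of coefficients (linear independence of monomials), and keeping straight that in the vector form $p_{\mathrm{inv}}$ is permitted a different invariant polynomial in each output slot rather than being a single scalar invariant.
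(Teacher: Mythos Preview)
Your proposal is correct and follows essentially the same coefficient-matching argument as the paper: expand each output coordinate in monomials, use the equivariance identity $p(Sv)_l = s_l\, p(v)_l$ together with linear independence of monomials to obtain $\tW^{(l)}_{d} = s_l\, s_1^{d_1}\cdots s_k^{d_k}\, \tW^{(l)}_{d}$, and then test with suitable sign vectors to force all coefficients with $d_l$ even or some other $d_j$ odd to vanish. Your treatment of the converse direction and the vector restatement $p = v \odot p_{\mathrm{inv}}$ is slightly more explicit than the paper's, but the method is the same.
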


\subsection{Sign Equivariant Polynomials $\RR^{n \times k} \to \RR^{n' \times k}$}\label{appendix:signeq_poly_general}
Finally, we will handle the case of  polynomials $p: \RR^{n \times k} \to \RR^{n' \times k}$ equivariant to $\diag(\{-1, 1\}^k)$. This is the case we most often deal with in practice, when we have input $V = \begin{bmatrix} v_1 & \ldots & v_k \end{bmatrix}$ for $k$ eigenvectors $v_i \in \RR^n$ of some $n \times n$ matrix. For $a \in [n']$ and $b \in [k]$, the $(a,b)$th output of a polynomial $\RR^{n \times k} \to \RR^{n' \times k}$ is 
\begin{equation}
    p(V)_{a,b} = \sum_{d_{i,j}=0}^D \tW_{\mathbf{d}}^{(a,b)} \prod_{i=1}^n \prod_{j=1}^k V_{i, j}^{d_{i,j}},
\end{equation}
where the sum ranges over $d_{i,j} \in \{0, \ldots, D\}$ for $i \in [n]$ and $j \in [k]$, and ${\mathbf{d} = (d_{1, 1}, \ldots, d_{n,1}, d_{1,2},  \ldots, d_{n,k})}$ is a shorthand to index coefficients $\tW_{\mathbf{d}}^{(a,b)} \in \RR$. By sign equivariance, we have that:
\begin{align}
    s_b \cdot p(V)_{a,b} & = p(VS)_{a,b}\\
    s_b \cdot \sum_{d_{i,j}=0}^D \tW_{\mathbf{d}}^{(a,b)} \prod_{i=1}^n \prod_{j=1}^k V_{i, j}^{d_{i,j}} & = \sum_{d_{i,j}=0}^D \tW_{\mathbf{d}}^{(a,b)} s_1^{\tilde d_1} \cdots s_k^{\tilde d_k} \prod_{i=1}^n  \prod_{j=1}^k V_{i, j}^{d_{i,j}},
\end{align}
where $\tilde d_j = \sum_{i'=1}^n d_{i', j}$ is the number of times that an entry from column $j$ of $V$ appears in the product $\prod_{i=1}^n \prod_{j=1}^k V_{i,j}^{d_{i,j}}$. As this holds over all $V$, we thus have that
\begin{equation}
    \tW_{\mathbf{d}}^{(a,b)} = s_b \cdot s_1^{\tilde d_1} \cdots s_k^{\tilde d_k} \cdot \tW_{\mathbf{d}}^{(a,b)}.
\end{equation}
By analogous arguments to the previous subsections, if $\tilde d_j$ is odd for $j \neq b$, we have that the $\tW_{\mathbf{d}}^{(a,b)} = 0$. Likewise, if $\tilde d_b$ is even, we have $\tW_{\mathbf{d}}^{(a,b)} = 0$. Thus, the constraint on $\tW$ is
\begin{equation}
    \tW_{\mathbf{d}}^{(a,b)} = \begin{cases}
    \text{free} & \text{$\sum_i d_{i,b}$ odd, and $\sum_i d_{i,j}$ even for each $j \neq b$} \\
    0 & \text{ else}
    \end{cases}.
\end{equation}
In particular, this means that the only nonzero terms in the sum that defines $p(V)_{a,b}$ have an even number of entries from column $j$ for $j \neq b$, and an odd number of entries from column $b$. Thus, each term can be written as $V_{i_\mathbf{d}, b} \cdot p_{\mrm{inv}}(V)_{\mathbf{d}}$ for some index $i_{\mathbf{d}} \in [n]$ and sign invariant polynomial $p_{\mrm{inv}}$. Moreover, it can be seen that any polynomial that only has terms of this form is sign equivariant. Thus, we have shown the following proposition:
\begin{proposition}\label{prop:feat_sign_equiv_poly}
    A polynomial $p: \RR^{n \times k} \to \RR^{n' \times k}$ is sign equivariant if and only if it can be written as
    \begin{equation}\label{eq:poly_form}
    p(V)_{a,b} = \sum_{d_{i,j}=0}^D \tW_{\mathbf{d}}^{(a,b)} V_{i_\mathbf{d}, b} \cdot p_{\mrm{inv}}(V)_{\mathbf{d}},
\end{equation}
where $p_{\mrm{inv}}$ is a sign invariant polynomial, the sum ranges over all $\mathbf{d}$, and $i_\mathbf{d} \in [n]$ for each $\mathbf{d}$.
\end{proposition}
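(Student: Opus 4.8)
The plan is to pass to coordinates and exploit the fact that distinct monomials in the entries of $V$ are linearly independent as functions on $\RR^{n\times k}$, so that the single relation $p(VS)=p(V)S$ decouples into one independent scalar equation per polynomial coefficient.

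First I would dispatch the ``if'' direction, which is immediate. Suppose $p$ has the form \eqref{eq:poly_form} and fix $S=\diag(s_1,\dots,s_k)$. Replacing $V$ by $VS$ scales $V_{i_\mathbf{d},b}$ by $s_b$ and leaves every $p_{\mathrm{inv}}(V)_\mathbf{d}$ unchanged (sign invariance of $p_{\mathrm{inv}}$), so $p(VS)_{a,b}=s_b\,p(V)_{a,b}$, which is exactly the $(a,b)$ entry of $p(V)S$; hence $p$ is sign equivariant.

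For the ``only if'' direction I would expand an arbitrary polynomial as $p(V)_{a,b}=\sum_{\mathbf{d}}\tW_{\mathbf{d}}^{(a,b)}\prod_{i,j}V_{i,j}^{d_{i,j}}$. Under $V\mapsto VS$ the monomial indexed by $\mathbf{d}$ is multiplied by $\prod_{j=1}^k s_j^{\tilde d_j}$, where $\tilde d_j=\sum_{i=1}^n d_{i,j}$ counts the factors taken from column $j$. Matching coefficients in $p(VS)_{a,b}=s_b\,p(V)_{a,b}$ (legitimate by linear independence of distinct monomials over an infinite field) gives, for every $\mathbf{d}$ and every $s\in\{-1,1\}^k$,
\[
    \tW_{\mathbf{d}}^{(a,b)}=s_b\Big(\prod_{j=1}^k s_j^{\tilde d_j}\Big)\tW_{\mathbf{d}}^{(a,b)}=s_b^{\,1+\tilde d_b}\Big(\prod_{j\neq b}s_j^{\tilde d_j}\Big)\tW_{\mathbf{d}}^{(a,b)}.
\]
Plugging in the sign vector that is $-1$ in a single coordinate $r$ and $+1$ elsewhere shows: if $\tilde d_r$ is odd for some $r\neq b$, or if $1+\tilde d_b$ is odd (i.e.\ $\tilde d_b$ even), then $\tW_{\mathbf{d}}^{(a,b)}=-\tW_{\mathbf{d}}^{(a,b)}=0$; and if instead $\tilde d_b$ is odd and every $\tilde d_j$ with $j\neq b$ is even, all the exponents above are even, the multiplier is $1$ for all $s$, and the coefficient is free. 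So the only surviving terms of $p(V)_{a,b}$ have $\tilde d_b$ odd and $\tilde d_j$ even for $j\neq b$. For each such $\mathbf{d}$ I would pick an index $i_\mathbf{d}$ with $d_{i_\mathbf{d},b}\geq1$ (possible since $\tilde d_b\geq1$) and peel off one factor $V_{i_\mathbf{d},b}$; the remaining monomial then has even total degree in every column, hence is sign invariant. Regrouping yields exactly the claimed form \eqref{eq:poly_form} with $p_{\mathrm{inv}}$ sign invariant. To recover the packaged statement of Theorem~\ref{thm:no_perm_poly}, read ``select row $i_\mathbf{d}$ of column $b$'' as a sign equivariant linear map $W^{(1)}$ (Lemma~\ref{lem:signeq_linear}), the channel-wise products against the invariant monomials as $\odot\,p_{\mathrm{inv}}(V)$, and ``sum over $\mathbf{d}$ with weights $\tW_{\mathbf{d}}^{(a,b)}$'' as a final sign equivariant linear map $W^{(2)}$.

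The argument is essentially bookkeeping, and the only delicate points are: (i) making sure the equivariance relation genuinely decouples coefficient-by-coefficient, which rests on the linear independence of monomials; and (ii) if one wants the clean three-factor form of Theorem~\ref{thm:no_perm_poly} rather than the coordinatewise form \eqref{eq:poly_form}, arranging the output channels (indexed by $\mathbf{d}$) so that a single invariant polynomial $p_{\mathrm{inv}}$ and a single pair $W^{(1)},W^{(2)}$ absorb all terms at once. Neither is a genuine obstruction.
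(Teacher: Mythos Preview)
Your proposal is correct and follows essentially the same argument as the paper: expand $p(V)_{a,b}$ in monomials, use linear independence of monomials to turn sign equivariance into per-coefficient constraints, deduce that the only surviving monomials have odd total degree in column $b$ and even total degree in every other column, and then peel off a single factor $V_{i_{\mathbf d},b}$ to leave a sign invariant remainder. Your treatment is slightly more explicit on the ``if'' direction and on the repackaging into the $W^{(2)}\bigl((W^{(1)}V)\odot p_{\mathrm{inv}}(V)\bigr)$ form of Theorem~\ref{thm:no_perm_poly}, but the core ideas and route are the same as the paper's.
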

Now, we show that this implies Theorem~\ref{thm:no_perm_poly}. In particular, we will write $p$ in the form
 \begin{equation}
        p(V) = W^{(2)}\left((W^{(1)} V) \odot q_{\mathrm{inv}}(V) \right),
\end{equation}
for sign equivariant linear maps $W^{(2)}$ and $W^{(1)}$, and a sign equivariant polynomial $q_{\mathrm{inv}}$. To do so, let $\tilde D$ denote the number of all possible $\mathbf{d}$ that the sum in \eqref{eq:poly_form} ranges over. We take $W^{(1)}: \RR^{n \times \tilde k} \to \RR^{\tilde D n' \times k}$ and $W^{(2)}: \RR^{\tilde D n' \times k} \to \RR^{n' \times k}$. These sign equivariant linear maps have to act independently on each column of their input, so  $W^{(1)}V = [W^{(1)}_1v_1, \ldots W^{(1)}_kv_k]$ for linear maps $W^{(1)}_i: \RR^n \to \RR^{\tilde D n'}$. We define $W^{(1)}_b$ to be the linear map such that $(W^{(1)}_b v_b)_{\mathbf{d}, a} = W_{\mathbf{d}}^{(a,b)}V_{i_{\mathbf{d}}, b}$ for $a \in [n']$. For the sign invariant polynomial $q_{\mathrm{inv}}$, we take $q_{\mathrm{inv}}(V)_{\mathbf{d}, a} = p_{\mathrm{inv}}(V)_{\mathbf{d}}$.

Finally, we define $W^{(2)}$ to compute the sum in \eqref{eq:poly_form}. In particular, for $X = [x_1, \ldots, x_k] \in \RR^{\tilde D n' \times k}$ we write $W^{(2)}X = [W^{(2)}_1 x_1, \ldots, W^{(2)}_k x_k]$, where $(W^{(2)}_b x_b)_{a} = \sum_{\mathbf{d}} x_{i_{\mathbf{d}}, b}$. It can be seen that with these definitions of $W^{(2)}, W^{(1)},$ and $q_{\mathrm{inv}}$, we have written $p$ in the desired form.

\subsection{Sign Invariant Polynomials and SignNet}\label{appendix:poly_signnet}

For completeness, here we state the form of the sign invariant polynomials $p: \RR^{n \times k} \to \RR$ on inputs $V = [v_1, \ldots, v_k] \in \RR^{n \times k}$. The derivation very closely follows that of the sign equivariant polynomials from $\RR^{n \times k} \to \RR^{n' \times k}$ in Appendix~\ref{appendix:signeq_poly_general}, so we omit this derivation.

\begin{proposition}\label{prop:general_sign_inv_poly}
    A polynomial $p: \RR^{n \times k} \to \RR$ is sign invariant if and only if it can be written
    \begin{equation}
        p(V) = \sum_{d_{i,j}=0}^D \tW_{\mathbf{d}} \prod_{i=1}^n \prod_{j=1}^k V_{i,j}^{d_{i,j}},
    \end{equation}
    where $\tW_{\mathbf{d}} \neq 0$ for $\mathbf{d} = (d_{1,1}, \ldots, d_{n,1}, d_{1,2}, \ldots, d_{n,k})$ only if $\sum_{i=1}^n d_{i,j}$ is even for each column $j \in [k]$.

    In particular, $p$ is sign invariant if and only if there is a polynomial $q: \RR^{n \times n \times k} \to \RR$ such that $p(V) = q([V_{i_1, j} \cdot V_{i_2, j}]_{i_1 \in [n], i_2 \in [n], j \in [k]})$.
\end{proposition}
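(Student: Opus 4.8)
The plan is to mirror the coefficient-comparison argument used for sign equivariant polynomials in Appendix~\ref{appendix:signeq_poly_general}, and then add a short combinatorial observation to obtain the ``in particular'' factorization through pairwise products.

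First I would expand an arbitrary polynomial $p:\RR^{n\times k}\to\RR$ in the monomial basis, $p(V) = \sum_{\mathbf d} \tW_{\mathbf d} \prod_{i,j} V_{i,j}^{d_{i,j}}$, and apply the definition of sign invariance: for every $S = \diag(s_1,\dots,s_k) \in \diag(\{-1,1\}^k)$ we have $p(VS) = p(V)$. Since $VS$ replaces $V_{i,j}$ by $s_j V_{i,j}$, the monomial indexed by $\mathbf d$ is multiplied by $\prod_j s_j^{\tilde d_j}$, where $\tilde d_j := \sum_i d_{i,j}$. Because distinct monomials are linearly independent as functions on $\RR^{n\times k}$, matching coefficients gives $\tW_{\mathbf d} = \big(\prod_j s_j^{\tilde d_j}\big)\tW_{\mathbf d}$ for all sign patterns $s$. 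Taking $s_j = -1$ and $s_{j'} = 1$ for $j' \neq j$ forces $\tW_{\mathbf d} = 0$ whenever $\tilde d_j$ is odd; hence $\tW_{\mathbf d} \neq 0$ only if every column sum $\tilde d_j$ is even. The converse is immediate: if all column sums are even then $\prod_j s_j^{\tilde d_j} = 1$ for every sign pattern, so each surviving monomial, and therefore $p$, is sign invariant.

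For the ``in particular'' equivalence I would argue both directions. If $p(V) = q\big([V_{i_1,j}V_{i_2,j}]_{i_1,i_2,j}\big)$, then each argument $V_{i_1,j}V_{i_2,j}$ is itself sign invariant (under $v_j \mapsto -v_j$ it picks up $s_j^2 = 1$), so $p$ factors through a sign-invariant map and is sign invariant. Conversely, suppose $p$ is sign invariant, so by the first part every monomial of $p$ has all column sums $\tilde d_j = 2m_j$ even. Fix such a monomial and a column $j$: the exponents $d_{1,j},\dots,d_{n,j}$ encode a multiset of row indices of even size $2m_j$, which I can partition into $m_j$ unordered pairs; each pair $\{i_1,i_2\}$ contributes a factor $V_{i_1,j}V_{i_2,j}$, and the product of these factors reproduces $\prod_i V_{i,j}^{d_{i,j}}$. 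Doing this for every column rewrites the whole monomial as a monomial in the variables $V_{i_1,j}V_{i_2,j}$; summing over the monomials of $p$ with the same coefficients $\tW_{\mathbf d}$ defines the required polynomial $q$.

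I do not expect a genuine obstacle here; the argument is essentially bookkeeping. The one point to state carefully is the linear independence of distinct monomials, so that the functional identity $p(VS) = p(V)$ can be read off coefficient by coefficient. The only place needing a little care is the last step: the pairing of row indices is a choice, so $q$ is not unique, but any fixed choice of pairing for each monomial suffices.
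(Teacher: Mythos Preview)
Your proposal is correct and matches the paper's approach: the paper explicitly states that the derivation ``very closely follows that of the sign equivariant polynomials from $\RR^{n \times k} \to \RR^{n' \times k}$ in Appendix~\ref{appendix:signeq_poly_general}'' and omits the details, which is exactly the coefficient-comparison argument you carry out. Your added combinatorial observation for the ``in particular'' clause (pairing row indices in each column to rewrite every surviving monomial as a monomial in the $V_{i_1,j}V_{i_2,j}$) is the natural way to finish and is not spelled out in the paper either.
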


The polynomials $V \mapsto V_{i_1, j} \cdot V_{i_2, j}$ for $i_1, i_2 \in [n]$ and $j \in [k]$ are thus generators of the ring of sign invariant polynomials from $\RR^{n \times k} \to \RR$. 

Notably, \citet{lim2023sign} propose universal sign invariant neural architectures, but do not characterize or otherwise use the sign invariant polynomials. Instead, their proof of universality uses topological constructions and shows that all sign invariant continuous functions can be decomposed in a simple form---namely, $\rho([\phi(v_i) + \phi(-v_i)]_{i=1, \ldots, k})$ for continuous functions $\rho$ and $\phi$. Our characterization of sign invariant polynomials provides another path to developing and analyzing the expressive power of sign invariant architectures.

In particular, we can give an alternative proof for the universality of SignNet.
\begin{proposition}[Universality of SignNet]
    Let $f: \mc X \subseteq \RR^{n \times k} \to \RR$ be a continuous sign invariant function on a compact domain $\mc X$, and let $\epsilon > 0$. Then there exists a continuous $\rho: \RR^{n^2k} \to \RR$ and continuous $\phi: \RR^n \to \RR^{n^2}$ such that $\abs{f(V) - \rho([\phi(v_i) + \phi(-v_i)]_{i=1, \ldots, k})} < \epsilon$ for all $V \in \mc X$.
\end{proposition}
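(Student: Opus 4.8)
The plan is to reduce the claim to the characterization of sign invariant polynomials in Proposition~\ref{prop:general_sign_inv_poly}. First I would approximate $f$ uniformly on $\mc X$ by a \emph{sign invariant} polynomial. Let $\mc X' = \{VS : V \in \mc X,\ S \in \diag(\{-1,1\}^k)\}$, which is compact as a finite union of compact sets; the sign invariance of $f$ lets us extend it to a continuous, sign invariant $\bar f$ on $\mc X'$ by $\bar f(VS) = f(V)$ (well-defined since $V_1 S_1 = V_2 S_2$ forces $V_2 = V_1(S_1 S_2)$ with both in $\mc X$, so $f(V_1) = f(V_2)$; continuous by the pasting lemma on the finitely many closed pieces $\mc X S$). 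By Stone--Weierstrass there is a polynomial $\tilde p$ with $\abs{\bar f(V) - \tilde p(V)} < \epsilon$ on $\mc X'$, and averaging over the sign group, $p(V) := 2^{-k}\sum_{S \in \diag(\{-1,1\}^k)} \tilde p(VS)$, yields a sign invariant polynomial that still satisfies $\abs{f(V) - p(V)} < \epsilon$ for all $V \in \mc X$. (This step is precisely the statement, already invoked in the paper, that invariant polynomials universally approximate invariant continuous functions for finite groups~\citep{yarotsky2022universal}.)

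Next, by Proposition~\ref{prop:general_sign_inv_poly} the sign invariant polynomial $p$ factors as $p(V) = q\big([V_{i_1, j} V_{i_2, j}]_{i_1 \in [n],\, i_2 \in [n],\, j \in [k]}\big)$ for an honest polynomial $q : \RR^{n^2 k} \to \RR$. The key observation is that for a single eigenvector $v \in \RR^n$ the matrix of products $[v_{i_1} v_{i_2}]_{i_1, i_2}$ is exactly $v v^\top$, and $v v^\top = (-v)(-v)^\top$. So I would set $\phi : \RR^n \to \RR^{n^2}$ to be the flattened map $\phi(v) = \tfrac12\, v v^\top$ and take $\rho := q$. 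Then $\phi(v_i) + \phi(-v_i) = v_i v_i^\top$ for every $i$, hence
\begin{equation}
\rho\big([\phi(v_i) + \phi(-v_i)]_{i=1,\ldots,k}\big) = q\big([v_i v_i^\top]_{i=1,\ldots,k}\big) = p(V),
\end{equation}
so $\abs{f(V) - \rho([\phi(v_i) + \phi(-v_i)]_{i=1,\ldots,k})} < \epsilon$ for all $V \in \mc X$. Both $\phi$ and $\rho$ are polynomials, hence continuous, as required.

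I do not expect a genuine obstacle: all the real content sits in Proposition~\ref{prop:general_sign_inv_poly}, which is established earlier in the appendix, and the SignNet realization above is then a one-line algebraic identity. The only point needing a little care is that the polynomial approximant of $f$ be taken sign invariant; this is handled by the symmetrization of the Stone--Weierstrass approximant (equivalently, by directly citing the universality of invariant polynomials), for which passing to the compact sign-closed set $\mc X'$ and the continuous extension $\bar f$ is all that is needed.
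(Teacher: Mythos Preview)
Your proposal is correct and follows essentially the same route as the paper: approximate $f$ by a sign invariant polynomial, invoke Proposition~\ref{prop:general_sign_inv_poly} to write it as $q$ applied to the pairwise products, then take $\phi(v)=\tfrac12\,\mathrm{vec}(vv^\top)$ and $\rho=q$. The only difference is that you spell out the symmetrization argument for obtaining a sign invariant polynomial approximant, whereas the paper simply asserts this step; your added detail is correct and welcome.
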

\begin{proof}
    First, let $p$ be a sign invariant polynomial that approximates $f$ to within $\epsilon$ on $\mc X$. Then using Proposition~\ref{prop:general_sign_inv_poly}, let $q$ be a polynomial such that $p(V) = q([V_{i_1, j} \cdot V_{i_2, j}]_{i_1 \in [n], i_2 \in [n], j \in [k]})$.

    Define $\phi: \RR^n \to \RR^{n^2}$ to map a $v \in \RR^n$ to the vector of pairwise products of elements in $v$ scaled by 1/2, that is
    \begin{equation}
        \phi(v) = \frac{1}{2}\mathrm{vec}(v v^\top)
    \end{equation}
    Then $\phi(v) + \phi(-v)$ is equal to the vector of pairwise products of $v$. Finally, we let $\rho = q$, which gives that
    \begin{equation}
        p(V) = \rho([\phi(v_i) + \phi(-v_i)]_{i=1, \ldots, k}),
    \end{equation}
    and hence
    \begin{equation}
        \abs{f(V) - \rho([\phi(v_i) + \phi(-v_i)]_{i=1, \ldots, k})} = \abs{f(V) - p(V)} < \epsilon
    \end{equation}
     for all $V \in \mc X$.
\end{proof}

Given the form of the sign invariant polynomials, this proof is quite simple. However, it is technically weaker than the result of \citet{lim2023sign}, as they invoke the Strong Whitney Embedding Theorem and only require $\phi$ to map to $\RR^{2n}$ instead of $\RR^{n^2}$. Still, further arguments could probably reduce the dimension required to about $2n$ in this polynomial-based proof; as the Gram matrix $vv^\top$ is rank one, it can be recovered almost always from about $2n$ of its entries~\citep{pimentel2016characterization}.

\section{Sign Equivariant Architecture Universality}\label{appendix:expressivity}

In this section, we prove Proposition~\ref{prop:signeq_universal} on the universality of our proposed sign equivariant architectures, which we restate here:
\newtheorem*{prop:signeq_universal}{Proposition~\ref{prop:signeq_universal}}
\begin{prop:signeq_universal}
    Functions of the form $v \mapsto v \odot \mathrm{MLP}(|v|)$ universally approximate continuous sign equivariant functions $f: \RR^k \to \RR^k$.

    Compositions $f_2 \circ f_1$ of functions $f_l$ as in \eqref{eq:no_perm_layer} universally approximate continuous sign equivariant functions $f: \RR^{n \times k} \to \RR^{n' \times k}$.
\end{prop:signeq_universal}
We prove the two statements of the proposition in the next two subsections.

\subsection{Universality for functions $\RR^k \to \RR^k$}

\begin{proof}
    Let $\mc X \subseteq \RR^k$ be a compact set, let $\epsilon > 0$, and let $\ftarget: \mc X \to \RR^k$ be a continuous sign equivariant function that we wish to approximate within $\epsilon$. Choose a sign equivariant polynomial $p$ that approximates $\ftarget$ to within $\epsilon / 2$ on $\mc X$. By compactness, we can choose a finite bound $B > 0$ such that $|v_i| < B$ for all $v \in \mc X$.

    By Proposition~\ref{prop:ktok_poly}, we can write $p(v)_l = v_l \cdot \sum_{d_1, \ldots, d_k = 0}^D \tW_{d_1, \ldots, d_k} v_1^{2d_1}\cdots v_k^{2d_k}$. By the universal approximation theorem for multilayer perceptrons, we can choose a $\mathrm{MLP}: \mathcal X \to \RR^k$ such that approximates $q(v) = \sum_{d_1, \ldots, d_k = 0}^D \tW_{d_1, \ldots, d_k} v_1^{2d_1}\cdots v_k^{2d_k}$ up to $\epsilon / (2B)$. Note that $q(|v|) = q(v)$, so $v \mapsto \mathrm{MLP}(|v|)$ also approximates $q$ within $\epsilon / (2B)$ accuracy.

    Thus, for all $v \in \mc X$, we have that
    \begin{align}
    |f(v)_i - p(v)_i| & = |v_i \cdot \mrm{MLP}(|v|)_i - v_i \cdot \sum_{d=1}^D \tW_{d_1, \ldots, d_k}v_1^{2d_1}\cdots v_k^{2d_k}|\\
     & = |v_i| | \mrm{MLP}(|v|)_i -  \sum_{d=1}^D \tW_{d_1, \ldots, d_k}v_1^{2d_1}\cdots v_k^{2d_k}|\\
     & \leq B \cdot | \mrm{MLP}(|v|)_i -  \sum_{d=1}^D \tW_{d_1, \ldots, d_k}v_1^{2d_1}\cdots v_k^{2d_k}|\\
     & < \epsilon / 2,
    \end{align}
    so $\norm{f - p}_\infty < \epsilon /2$ on $\mc X$ and we are done by the triangle inequality.
\end{proof}

\subsection{Universality for functions $\RR^{n \times k} \to \RR^{n' \times k}$}

Recall that each layer of our sign equivariant network from $\RR^{n \times k} \to \RR^{n' \times k}$ takes the form
\begin{equation*}
 f_l(V) = [W_1^{(l)} v_1, \ldots, W_k^{(l)} v_k] \odot \mrm{SignNet}_l(V).
\end{equation*}

\begin{proof}
    Let $\mc X \subseteq \RR^{n \times k}$ be compact, and let $f_{\mrm{target}}: \mc X \to \RR^{n' \times k}$ be a continuous sign equivariant function that we wish to approximate. Since $\mc X$ is compact, we can choose a finite bound $B > 0$ such that $|V_{ij}| < B$ for all $V \in \mc X$. Let $p: \mc X \subseteq \RR^{n \times k} \to \RR^{n' \times k}$ be a sign equivariant polynomial that approximates $f_{\mrm{target}}$ up to $\epsilon/2$ accuracy. Using Proposition~\ref{prop:feat_sign_equiv_poly}, we can write
    \begin{equation*}
    p(V)_{a,b} = \sum_{d_{i,j}=0}^D \tW_{\mathbf{d}}^{(a,b)} V_{i_\mathbf{d}, b} \cdot p_{\mrm{inv}}(V)_{\mathbf{d}},
    \end{equation*}
    for some sign invariant polynomials $p_{\mrm{inv}}(V)_{\mathbf{d}}$. We will have one network layer $f_1$ approximate the summands, and have the second network layer $f_2$ compute the sum.

    First, we absorb the coefficients $\tW_{\mathbf{d}}^{(a,b)}$ into the sign invariant part, by defining the sign invariant polynomial $q_{\mrm{inv}}(V)_{\mathbf{d}, a, b} =  \tW_{\mathbf{d}}^{(a,b)} p_{\mrm{inv}}(V)_{\mathbf{d}}$, so we can write
    \begin{equation*}
    p(V)_{a,b} = \sum_{d_{i,j}=0}^D  V_{i_\mathbf{d}, b} \cdot q_{\mrm{inv}}(V)_{\mathbf{d}, a,b}.
    \end{equation*}
    Now, let $d_{\mrm{hidden}} \in \NN$ denote the number of all possible $\mathbf{d}$ that appear in the sum, multiplied by $n'$. We define $f_1: \mc X \to \RR^{d_{\mrm{hidden}} \times k}$ as follows. As SignNet~\citep{lim2023sign} universally approximates sign invariant functions on compact sets, we can let $\mrm{SignNet}_1: \mc X \to \RR^{d_{\mrm{hidden}} \times k}$ be a SignNet that approximates $q_{\mrm{inv}}(V)$ up to $\epsilon/(2B)$ accuracy, so
    \begin{equation}
        |\mrm{SignNet}_1(V)_{(\mathbf{d}, a), b} - q_{\mrm{inv}}(V)_{\mathbf{d}, a, b} | < \frac{\epsilon}{2B\cdot d_{\mrm{hidden}}}.
    \end{equation}
    For $b \in [k]$, we also define the weight matrices $W_b^{(1)} \in \RR^{d_{\mrm{hidden}} \times n}$ of the layer by letting the $(\mathbf{d}, a)$th row $(W_b^{(1)})_{(\mathbf{d}, a), :}$ for any $a \in [n]$ only be nonzero in the $i_{\mathbf{d}}$th index, where it is equal to 1. Thus, 
    \begin{equation}
        (W_b^{(1)} v_b)_{(\mathbf{d},a)} = V_{i_{\mathbf{d}}, b}.
    \end{equation}
    Hence, the first layer takes the form
    \begin{equation}
        f_1(V)_{(\mathbf{d}, a), :} = \begin{bmatrix} V_{i_{\mathbf{d}}, 1} \cdot \mrm{SignNet}_1(V)_{(\mathbf{d}, a), 1} & \ldots & 
 V_{i_{\mathbf{d}}, k}  \cdot  \mrm{SignNet}_1(V)_{(\mathbf{d}, a), k} \end{bmatrix} \in \RR^{k}.
    \end{equation}
    Now, for the second layer, we let $\mrm{SignNet}(V)_{i, j} = 1$ for all $i \in [n], j \in [k]$, which can be represented exactly. Then for each column $b \in [k]$ we will define weight matrices $W^{(2)}_b$ such that $(W^{(2)}_{b})_{a, (\mathbf{d}, i)} = 1$ if $a = i$ and is $0$ otherwise. Then we can see that
    \begin{equation}
        f_2 \circ f_1(V)_{a,b} = \sum_{\mathbf{d}} V_{i_\mathbf{d}, b} \cdot \mrm{SignNet}_1(V)_{(d,a), b}.
    \end{equation}
    To see that this approximates the polynomial $p$, for any $V \in \mc X$ we can bound
    \begin{align}
        \abs{p(V)_{a,b} - f_2 \circ f_1(V)_{a,b}} & = \abs{\sum_{\mathbf{d}} V_{i_{\mathbf{d}}, b} \cdot \left(q_{\mrm{inv}}(V)_{\mathbf{d}, a, b} - \mrm{SignNet}_1(V)_{(\mathbf{d}, a), b} \right) }\\
        & \leq \sum_{\mathbf{d}} \abs{V_{i_{\mathbf{d}}, b}} \abs{ \left(q_{\mrm{inv}}(V)_{\mathbf{d}, a, b} - \mrm{SignNet}_1(V)_{(\mathbf{d}, a), b} \right) }\\
        & \leq B \sum_{\mathbf{d}} \abs{ \left(q_{\mrm{inv}}(V)_{\mathbf{d}, a, b} - \mrm{SignNet}_1(V)_{(\mathbf{d}, a), b} \right) }\\
        & < B \sum_{\mathbf{d}} \frac{\epsilon}{2B d_{\mrm{hidden}}}\\
        & \leq \frac{\epsilon}{2}
    \end{align}
    By the triangle inequality, $f_2 \circ f_1$ is $\epsilon$-close to $\ftarget$, so we are done.

\end{proof}

\section{Experimental Details}\label{appendix:experiments}

\subsection{Miscellaneous Experimental Details}

We ran the experiments on a HPC server with CPUs and GPUs.
Each experiment was run on a single NVIDIA V100 GPU with 32GB memory. The runtimes for some of our experiments are included in the main paper. Our codes for our models and experiments are open-sourced at \url{https://github.com/cptq/Sign-Equivariant-Nets}.

\subsection{Link Prediction in Nearly Synthetic Graphs}\label{appendix:link_pred_exp}

The base graphs $H$ we generate are Erd\"os-Renyi or Barab\'asi-Albert graphs with 1000 nodes. We use NetworkX~\citep{hagberg2008exploring} to generate and process the graphs. The Erd\"os-Renyi graphs have edge probability $p = .05$ and the Barab\'asi-Albert graphs have $m=20$ new edges per new node.
Let $V = [v_1, \ldots, v_k]$ be Laplacian eigenvectors of the graph. We take $k=16$ in these experiments.
The unlearned decoder baseline simply takes the predicted probability of a link between $i$ and $j$ to be proportional to the dot product of the eigenvectors embeddings of node $i$ and node $j$; this has no learnable parameters. In other words, the node embeddings $z_i$ and $z_j$ are taken to be $V_{i, :}$ and $V_{j, :}$ respectively, and the edge prediction is $z_i^\top z_j$. The learned decoder baseline takes the same $z_i$ and $z_j$, but takes the edge prediction to be $\mrm{MLP}(z_i \odot z_j)$. Every other method learns node embeddings $z_i$ and $z_j$, and takes the edge prediction to be $z_i^\top z_j$.

Each model is restricted to around 25,000 learnable parameters (besides the Unlearned Decoder, which has no parameters). We train each method for 100 epochs with an Adam optimizer~\citep{kingma2014adam} at a learning rate of .01. The train/validation/test split is 80\%/10\%/10\%, and is chosen uniformly at random.

\subsection{Details on n-body Simulations}\label{appendix:nbody}

We follow the experimental setting and build on the code of~\citet{puny2021frame} (no license as far as we can tell) for the n-body learning task. The code for generating the data stems from~\citet{kipf2018neural} (MIT License) and~\citet{fuchs2020se} (MIT License). There are 3000 training trajectories, 2000 validation trajectories, and 2000 test trajectories. We modify the data generation code to apply to general dimensions $d > 3$. We do not change any of the scaling factors in doing so. For each dimension $d$, we use the same hyperparameters for both the frame averaging model and the sign equivariant model.

\begin{table}[ht]
    \centering
    \caption{n-body simulation results for dimension $d=3$. Lower MSE is better. Results for the models besides our sign equivariant networks are from~\citep{satorras2021n, puny2021frame, kaba2023equivariance}.}
    \begin{tabular}{lc}
        \toprule
         Model & Test MSE  \\
         \midrule
         Linear & .0819 \\
         $SE(3)$ Transformer~\citep{fuchs2020se} & .0244  \\
         TFN~\citep{thomas2018tensor} & .0155 \\
         Radial Field~\citep{kohler2020equivariant} & .0104 \\
         EGNN~\citep{satorras2021n} & .0071 \\
         FA-GNN~\citep{puny2021frame} & .0057  \\
         CN-GNN~\citep{kaba2023equivariance} & \bf .0043 \\
         \midrule
         Sign Equivariant (Ours) & .0065 \\
         \bottomrule
    \end{tabular}
    \label{tab:nbody_dim_3}
\end{table}

In Table~\ref{tab:nbody_dim_3}, we also compare our sign equivariant networks to other rotationally or orthogonally equivariant networks in the special case of $d=3$, where all models can be applied (whereas many of them cannot be applied or are inefficient for $d > 3$).

\subsection{Node Classification on CLUSTER}\label{appendix:cluster}

In Section~\ref{sec:cluster}, we show results for the node classification task CLUSTER~\citep{dwivedi2020benchmarking}, where the task is to cluster nodes in graphs drawn from Stochastic Block Models~\citep{abbe2017community}. 
Models are restricted to a 100k learnable parameter budget. We largely follow the experimental setting of \citet{rampasek2022recipe}, except we report results for the eigenvector based methods on 5 runs instead of 10.

We test several eigenvector based methods within the GraphGPS framework and codebase~\citep{rampasek2022recipe} (MIT License), which is a state of the art Transformer / GNN hybrid. Firstly, we make use of the PEG style GraphGPS, which means that the MPNN in the $l$th GraphGPS layer takes as edge features $e_{ij}^{(l)} = \norm{V_i^{(l)} - V_j^{(l)} }^2$, where $V_i^{(l)} \in \RR^k$ is the eigenvector embedding of node $i$ in layer $l$. This is fully $O(k)$ invariant (which is much stricter than sign / basis invariance), so we relax this to just be sign invariant in our model by learning a diagonal matrix $D^{(l)}$ such that $e_{ij}^{(l)} = V_i^{(l) \top} D^{(l)} V_j^{(l)}$. Also, the standard GraphGPS only updates eigenvector representations (in a non-equivariant manner) before most of the neural network modules. When we add our sign equivariant model, we instead update eigenvector representations within each GraphGPS layer via $V^{(l)} = f_\theta^{(l)}(V^{(l-1)})$ for a sign equivariant $f_\theta^{(l)}: \RR^{n \times k} \to \RR^{n \times k}$.

\end{document}